\newtheorem{property}{Property}[section]
\begin{document}
\title{Transformers know more than they can tell \\ Learning the Collatz sequence}

\author{
    \name Fran\c{c}ois Charton \email francois@axiommath.ai\\ 
    \addr Axiom Math\\ 
    CERMICS, Ecole Nationale des Ponts et Chaussées 
    \AND 
    \name Ashvni Narayanan \email ashvni.narayanan@sydney.edu.au\\     \addr Sydney Mathematical Research Institute, University of Sydney
}

 \editor{}

\maketitle

\begin{abstract}
We investigate transformer prediction of long Collatz steps, a complex arithmetic function that maps odd integers to their distant successors in the Collatz sequence ( $u_{n+1}=u_n/2$ if $u_n$ is even, $u_{n+1}=(3u_n+1)/2$ if $u_n$ is odd). Model accuracy varies with the base used to encode input and output. It can be as high as $99.7\%$ for bases $24$ and $32$, and as low as $37$ and $25\%$ for bases $11$ and $3$. Yet, all models, no matter the base, follow a common \emph{learning pattern}. As training proceeds, they learn a sequence of classes of inputs that share the same residual modulo $2^p$. Models achieve near-perfect accuracy on these classes, and less than $1\%$ for all other inputs. 
This maps to a mathematical property of Collatz sequences: the length of the loops involved in the computation of a long Collatz step can be deduced from the binary representation of its input. The learning pattern reflects the model learning to predict inputs associated with increasing loop lengths. 
An analysis of failure cases reveals that almost all model errors follow predictable patterns. Hallucination, a common feature of large language models, almost never happens. In over $90\%$ of failures, the model performs the correct calculation, but wrongly estimates loop lengths. 
Our observations give a full account of the algorithms learned by the models. They suggest that the difficulty of learning such complex arithmetic function lies in figuring the control structure of the computation -- the length of the loops. We believe that the approach outlined here, using mathematical problems as tools for understanding, explaining, and perhaps improving language models, 
can be applied to a broad range of problems and bear fruitful results.
\end{abstract}

\section{Introduction}

Transformers~\citep{vaswani17}, the prevalent architecture in deep learning models~\citep{radford2019,touvron2023,gemini2025}, prove surprisingly weak when trained on simple arithmetic operations~\citep{lee2023,hanna2023,wang2021}. Learning long integer addition, a basic operation, requires advanced techniques, such as scratchpads~\citep{nye2021} or chain-of-thought~\citep{wei2023}. Generalizing to larger (or smaller) operands that those used during training requires problem-specific positional embeddings~\citep{mcleish2024, zhou2023}. In many recent models, integer multiplication fails unless one of the operands is small~\citep{qiu2024,dziri2023}, and complex arithmetic operations cannot be learned~\citep{lee2023}.
These limitations are all the more shocking as transformers have been used to solve hard problems of mathematics, such as symbolic integration~\citep{lample2019}, discovering Lyapunov functions and Gr\"obner bases~\citep{alfarano2024, kera2024,kambe2025}, or finding new solutions to old problems in combinatorics~\citep{charton2024patternboost,ellenberg2025}. 

Workarounds have been proposed, such as interfacing with external tools~\citep{schick2023} or generating computer code that performs the calculations~\citep{Funsearch}. Theoretical papers have also investigated the limitations of transformer architectures when performing exact computations on very long sequences~\citep{zhou2023,weiss2021}. Still, little is understood about why transformers struggle with arithmetic tasks, and how such functions are learned, from examples only. Prior works  focus on small models and simple operations, such as modular addition~\citep{zhong2023,nanda2023}, the M\"obius function~\citep{lowryduda2025}, or the greatest common divisor~\citep{charton2024gcd}.

In this paper, we train small transformers to learn a complex arithmetic function, the long Collatz step (Sec.~\ref{sec:long_collatz}): predicting a distant successor in the Collatz sequence ($u_{n+1}=(3u_n+1)/2$ if $u_n$ is odd, $u_{n+1}=u_n/2$ if $u_n$ is even). We chose this particular function for a number of reasons. First, the Collatz sequence, despite having a very simple definition, is known to give rise to complex mathematical behavior. Because it is the subject of a longstanding conjecture, it was heavily researched. We will leverage this prior knowledge when designing and interpreting our experiments. Second, the algorithm for computing the long Collatz step features two loops of variable length. This is a much more complex task than the basic arithmetic operations studied in prior works. Finally, as its input varies, the long Collatz step takes a lot of different values ($23.5$ million different values over the $50$ million first odd integers). Prior works focused on functions with a small number of outcomes. Learning such functions amount to a pattern recognition problem: mapping inputs onto a small number of classes. Learning the long Collatz step, on the other hand, amounts to a \emph{regression} problem, a harder setting for learning machines.

We compare models using different bases to represent integers (Sec.~\ref{sec:base}). The best models (bases $24$, $16$ and $32$) achieve a near-perfect accuracy of $99.7\%$, while odd-base models struggle to get past $80\%$. Yet, all models learn the long Collatz step in a sequence of discrete steps, where specific classes of input are learned and predicted with very high accuracy (Sec.~\ref{sec:pattern}). This \emph{learning pattern} is independent of the base, and corresponds to a deep mathematical property of the Collatz sequence: the structure of a long Collatz step, the length of the loops involved in its computation, can be deduced from the binary representation of its input (Sec.~\ref{sec:theory}).

An investigation of model errors (Section~\ref{sec:errors}) reveals that, whereas large language models commonly ``hallucinate'' random solutions, our models fail in principled ways. In almost all cases, the models perform the correct calculations for the long Collatz step, but use the wrong loop lengths, by setting them to the longest loop lengths they have learned so far. This unites model errors and the learning pattern, and provides a full account of the algorithm learned by our models. 
It also suggests that the hardness of the task does not lie in learning the arithmetic operations but in figuring the control structures of the algorithm, the lengths of the loops, which can be deduced from the binary representation of the input.

These results shed light on the learning patterns at work in math transformers. They confirm that, when learning the long Collatz steps, models do not rely on shortcuts or shallow statistical properties of the task, but learn deep mathematical properties of the problem, which allow< them to perform better than their accuracy figure suggest, that is, fail in principled ways instead of hallucinating.
We believe that the method we propose to investigate the long Collatz step can be applied to many problems, and help understand how transformers learn mathematics.

\section{Related works}

Neural networks were first applied to problems of arithmetic in the early 1990s~\citep{SiuRoychowdury92}. Recurrent architectures (precursors of the transformer) were proposed by~\citet{Zaremba15}, \citet{kalchbrenner15}, and~\citet{kaiser2015neural}. Recent research mostly focuses on fine-tuning large pre-trained transformers on arithmetic tasks, to solve math word problems~\citep{meng2019,griffith2021}. 
The hardness of arithmetic was first observed by~\citet{saxton2019}. For a survey of the limitations of transformers on arithmetic tasks, see~\citet{lee2023} and~\citet{dziri2023}.

Long integer addition received most of the attention. Scratchpad methods, that allow transformers to add long integer, were proposed by~\citet{nye2021}. The importance of base representation was discussed by~\citet{nogueira2021}.
Length generalization, adding longer (or shorter) integers than those in the training set was discussed by~\citet{jelassi2023}, and addressed by~\citet{mcleish2024}, via a task-specific positional embedding.

Modular and finite field arithmetic was investigated in the context of grokking~\citep{power2022}, delayed generalization of overfitted models~\citep{gromov2023,nanda2023}, and cryptography~\citep{wenger}. In \citet{saxena2025} and ~\citet{charton2024repeated}, we propose losses, embeddings and training techniques to help learn such tasks.

In \citet{charton2024gcd}, we train transformers to predict the greatest common divisor (GCD) of two integers, a basic arithmetic operation with a small number of possible outcomes. The methodology is very close to ours: small transformers trained on randomly generated examples, using different bases to encode integers. Models learn to cluster their input pairs into classes sharing a common divisor, a learning pattern similar to ours. However, the pattern depends on the encoding base. Adjusting the training distribution, by oversampling small operands and large outcomes, brings a large improvement in model performance, a phenomenon we do not observe for the long Collatz steps (Section~\ref{sec:uniform}).

Most prior work on math transformer interpretability focuses on model weights~\citep{nanda2023,zhong2023}. Methods similar to ours were proposed for linear algebra~\citep{charton2022}, and the GCD~\citep{charton2024gcd}.

The Collatz conjecture (all Collatz sequences eventually reach $1$), also known as the 3x+1 problem, Ulam conjecture, Kakutani’s problem, Thwaites conjecture, or the Syracuse problem, was proposed by Collatz in 1937, when he was a student. It is often cited as an example of a very hard problem with a very simple statement, and was heavily studied \citep[for a summary]{lagarias2021}. It holds true for all starting numbers up to $2^{71}$.

Many generalizations of the Collatz conjecture have been proposed: to positive reals ~\citep{konstadinidis2006}, with different constants~\citep{bruschi2008}, over $\mathbb{F}_2$~\citep{gallardo2023}, in polynomial rings over commutative rings~\citep{behajaina2024}, to rationals with odd numerators and denominators ~\citep{kobus2025}. It is related to other hard problems, such as the ABC conjecture~\citep{Rozier2025}, Terras' conjecture~\citep{rozierpar2025}, and the existence of special additive systems in $\mathbb{Z}$~\citep{zabolotskii2025}. 

Recent attempts at proving the Collatz conjecture used dynamical systems~\citep{siegel2024,fu2025,dominici2007}, operator theory ~\citep{mori2025} and graph theory ~\citep{Andrei2000}. \citet{tao2022}, using a probabilistic argument, proved that almost all Collatz orbits attain almost bounded values.

\section{Long Collatz steps}\label{sec:long_collatz}

The Collatz sequence $\mathcal C(n)$ associated with a positive integer $n$ is defined as: 
\begin{align*}
    c_0 &= n, \\
    c_{i+1} &=  \frac{c_i} 2, \text{if $c_i$ is even (a \emph{down-step}),} \\
    c_{i+1}  &=  \frac{3 c_i + 1} 2, \text{if $c_i$ is odd (an \emph{up-step}).} 
\end{align*}

$\mathcal C(1)$, the Collatz sequence associated to $1$ is the infinitely repeating loop $(1,2,1)$. It is conjectured that any Collatz sequence $\mathcal C(n)$ eventually reaches $1$. 

Assume $n$ is odd, and let $2^k$ be the largest power of two dividing $n+1$, that is, $n=2^km-1$ with $m$ odd and $k \geq 1$. Since $c_0=n$ is odd, $c_1= \frac{3 n +1} 2  =   2^{k-1} (3m) - 1,$ an odd integer if $k>1$.  Repeating this $k$ times, we have.

\begin{property}Any odd positive integer of the form $n=2^k m-1$, with $k\geq 1$ and $m$ odd, is transformed, after $k$ Collatz up-steps, into $c_{k}=3^km-1,$ an even positive integer.\end{property}

Let $k'\geq 1$ be the largest power of two dividing $c_{k}$, i.e. $c_{k}=2^{k'}p$ with $p$ odd. The next $k'$ steps of the Collatz sequences will be division by $2$ (down-steps), and we have $$c_{k+k'}=\frac{3^km-1}{2^{k'}}=\frac{{(\frac3 2)}^k (n+1) -1}{2^{k'}}.$$ 

We call this sequence of $k+k'$ Collatz steps a \textbf{long Collatz step}, and define, for any odd $n=2^km-1$, its \textbf{long Collatz successor}\footnote{We could define the long Collatz successor of an even integer as $\kappa(2^km)=m.$} as $$\kappa(n)=\frac{{(\frac3 2)}^k (n+1) -1}{2^{k'}}.$$
The long Collatz step transforms an odd integer into an odd integer. It has $1$ as a fixed point, the only one if the Collatz conjecture is true. 

To compute $\kappa(n)$ for a given $n$, one first applies $k$ times the transformation $n \to \frac {3n+1}2$, then $k'$ times $n \to \frac n 2$. $k$ and $k'$ are functions of $n$. $k$ is the largest power of two that divides $n+1$, i.e. the number of ones at the right of the binary representation of $n$. $k'$ is the largest power of two dividing $c_{k}=3^km-1= {(\frac 3 2)}^k (n+1)-1$, the \emph{apex} $a$ of the long Collatz step. 

Therefore, the algorithm for computing the long Collatz step involves two loops of variable length. If we know the binary representation of $n$ and $a$, or can test the parity of intermediary results, we can compute $\kappa(n)$ with one of the two algorithms below.

\begin{algorithm}[]
    \small
    $k \gets$ number of right $1$s in the binary representation of $n$\;
    \For { $1\leq i \leq k$}{
     $n\gets \frac{3n+1}{2}$
    }
    $a \gets n$\;
    $k' \gets$ number of right $0$s in the binary representation of $a$\;
    \For {$1\leq i \leq k'$}{
     $n\gets \frac{n}{2}$
    }
    \Return $n$
    \caption{\small $\kappa(n)$ with two \texttt{for} loops}\label{alg:kappa_with_for}
\end{algorithm}

\begin{algorithm}[]
    \small
    \While { $n \equiv 1 \pmod 2$}{
     $n\gets \frac{3n+1}{2}$
    }
    \While {$n \equiv 0 \pmod 2$}{
     $n\gets \frac{n}{2}$
    }
    \Return $n$
    \caption{\small $\kappa(n)$ with two \texttt{while} loops}\label{alg:kappa_with_while}
\end{algorithm}

\section{Learning the long Collatz step}\label{sec:base}

We train transformers to predict $\kappa(n)$, the long Collatz successor of $n$, from a large set of pairs $(n,\kappa(n))$, with $n$ odd, uniformly drawn between $1$ and $10^{12}$. Our models are sequence-to-sequence transformers~\citep{vaswani17}, with a bidirectional encoder, with $4$ layers, an embedding dimension of $512$, and $8$ attention heads, and an autoregressive decoder with $1$ layer, a dimension of $512$ and $8$ heads, connected to the encoder by a cross-attention mechanism.

Model inputs and outputs are encoded as sequences of digits in base $B$. We train $56$ models, for all bases from $2$ to $57$, on about $300$ million random pairs $(n, \kappa(n))$. Models minimize a cross-entropy loss, using the Adam optimizer with a learning rate of $3\cdot 10^{-5}$. 

At regular intervals (every 300,000 training examples, an ``epoch'' henceforth), model accuracies are measured on a test set of 100,000 random pairs, as the percentage of exact predictions of $\kappa(n)$. The large size of the problem space ($5\cdot 10^{11}$ possible values of $n$) guarantees that there is almost no overlap between training and test set. 

Table~\ref{tab:base_acc} summarize trained model accuracies. The best performing models, using base $24$, $16$ and $32$, achieve $99.7\%$ accuracy, a surprising result given the complexity of the task and the low performance of arithmetic transformers reported in prior works~\citep{lee2023,charton2024gcd}. Performance varies with the base: all even-base models achieve $88\%$ accuracy or more. Most odd-base models are below $80\%$ accuracy. Accuracies tend to concentrate around a small set of values: of the $28$ odd base models, $12$ achieve $51\%$ accuracy, $8$ achieve $55\%$ and $4$ achieve $81\%$.

\begin{figure*}[!t]
\begin{minipage}{0.5\textwidth}
    \small
    \centering
    \begin{tabular}{ll}
        \toprule
         Accuracy & Bases  \\
        \midrule
        99.5+\% & 24, 16, 32, 36, 12, 48, 8 \\
        99 -- 99.5\% & 4, 18, 56, 40  \\
        95 -- 99\% & 6, 20, 28, 2, 52, 10, 44, 54 \\
        91 -- 95\% & 42, 22, 30, 14, 50, 34 \\
        88 -- 89\% & 26, 46, 33\\
        81 -- 82\% & 9, 45, 38, 15, 17 \\
        70 -- 71\% & 49, 47, 21, 27, 51, 39, 31, 41, 23, 7, 25 \\
        59 -- 66\% & 57 (65\%), 43 (59\%)\\ 
        55 -- 56\% & 13, 37, 19, 55, 29, 53, 35, 5 \\
        25 -- 37\% & 11 (37\%), 3 (25\%) \\
       \bottomrule
    \end{tabular}
    \captionof{table}{\small Model accuracy for different bases, after training on 300 million examples. Bases listed by decreasing accuracy.}
    \label{tab:base_acc}
\end{minipage}
\hfill
\begin{minipage}{0.44\textwidth}
\includegraphics[width=0.90\textwidth]{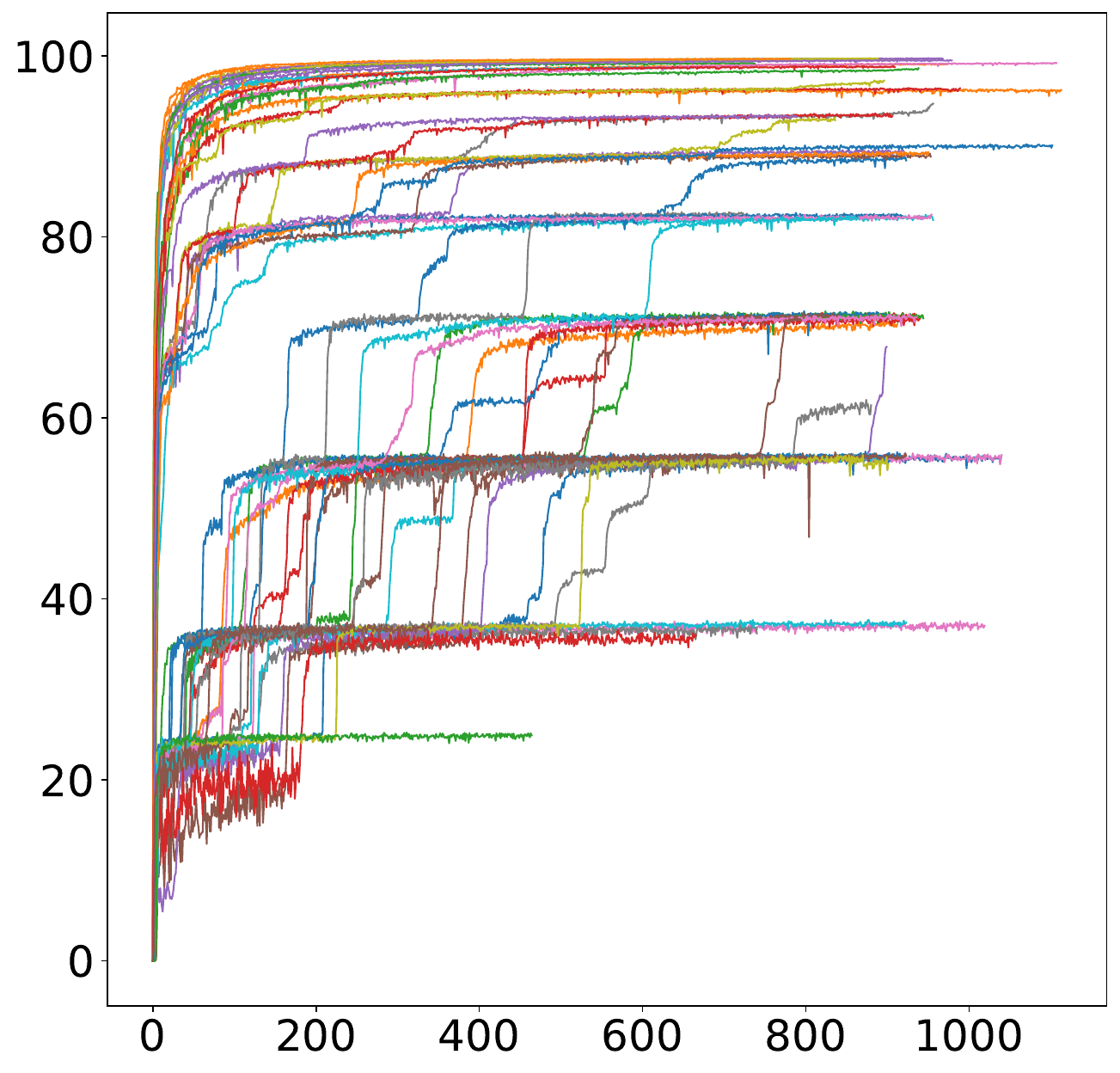}
\captionof{figure}{\small  Learning curves for bases 2 to 57 (accuracy vs epochs of 300,000 examples).}
    \label{fig:stepcurves}
\end{minipage}\\
\end{figure*}

The learning curves (Figure~\ref{fig:stepcurves}) confirm this phenomenon. 
Learning happens in steps, as model accuracies jump from one level to the next. These accuracy levels are \emph{quantized}: they take the same values for all bases.
As training proceeds, all odd-base model accuracies step from $25\%$ to $37$, $55$, $71$ and sometimes $81\%$. Even-base model step from $71\%$  accuracy to $81$, $88$ and $92\%$.
This \emph{learning pattern} is intriguing. Step-like learn curves were observed in prior works on arithmetic \citep{charton2024gcd}, but accuracy levels varied with the base. The fact that all bases share the same accuracy levels suggests that the learning pattern is a feature of the Collatz sequence.

\subsection{The learning pattern -- Correct predictions in different bases}\label{sec:pattern}

Models using different bases not only achieve the same accuracy levels, they also succeed and fail on the same inputs. Consider the base-$15$ and base-$17$ models, which both achieve $81\%$ accuracy. If we test them on the \emph{same} sample of 100,000 examples, we notice that they agree in $98\%$ of cases: they are both correct on 80,997 examples, and both wrong on 16,955. The two models disagree (one is correct, the other is wrong) on 2,048 examples, $2\%$ of the test set or about $10\%$ of model errors. This suggests that the steps in the learning pattern are associated with particular classes of inputs, that all models learn.

To characterize these classes, we calculate model accuracy on test examples that share a common residual modulo $2^p$. For $p=3$, we compare model performance on inputs of the form $8k+1$, $8k+3$, $8k+5$ and $8k+7$. 

We notice that, for $p$ large enough, all residual classes modulo $2^p$ are either predicted with almost perfect accuracy (over $95\%$), or not learned at all (less than $1\%$). For instance, the base-$11$ model achieves $99.9, 99.9$ and $96.6\%$ accuracy on inputs congruent to $1$, $9$ and $11$ modulo $16$, and less than $1\%$ on all other inputs ($0.9\%$ for inputs congruent to $7$, less than $0.1\%$ on other classes). This accounts for the $37\%$ accuracy reached by the base-$11$ model. The nine models that achieve $55\%$ accuracy predict inputs congruent to $1$ modulo $8$, $11$ and $13$ modulo $16$ and $7$ modulo $32$ with over $95\%$ accuracy, and all others with less than $1\%$. This pattern is observed in all bases, throughout training. Accuracy jumps up when a new residual class is learned, and the order of learning is the same for all bases.

Residual classes modulo $2^p$ correspond to specific endings in the binary representation of inputs. Inputs congruent to $1$ modulo $8$ are the positive integers that have a binary representation ending in $001_2$. Therefore, our base-$11$ model has learned to predict inputs ending in $001_2$ and $1011_2$ (and none others). Models with $55\%$ accuracy have learned two additional classes: inputs ending in $1101_2$ and $00111_2$, and models achieving $71\%$ accuracy (e.g. base $21$) predict three more: $00101_2$, $10011_2$ and $101111_2$. The same pattern holds in even-base models: the base-$50$ model, which achieves $92.3\%$ accuracy, predicts inputs ending in $01_2$, $011_2$, $00111_2$, $101111_2$, $110111_2$, $1001111_2$, $1010111_2$, $00010111_2$ and $1110001111_2$. 

This provides a complete account of the learning pattern. Every step in the learning curve happens when the model learns to predict (with near-perfect accuracy) a new class of inputs, characterized by their binary ending. All bases learn these classes in the same order, beginning with  $001_2$, $1011_2$, $1101_2$, $00111_2$ and $10011_2$. Model accuracy is the frequency of these classes in the test set: twice the sum of the moduli, e.g. $2(\frac{1}{8}+\frac{1}{16})$ for base $11$. The learning pattern applies to all bases, but learning is faster in even bases, because the binary representation of inputs is easier to learn (all even numbers end in particular digits).

\subsection{Why the binary classes -- Learning k and k'}\label{sec:theory}
The learning pattern, common to all bases, appears to be a property of the long Collatz step. In fact, we will now show that the residual classes correspond to inputs $n$ associated with particular values of $k$ and $k'$, the lengths of the two loops in the computation of $\kappa(n)$. In other words, instead of learning the mapping $$\kappa(n) = \frac{(\frac 3 2)^k (n+1)-1}{2^{k'}},$$ with $k$ and $k'$ functions of $n$, the models learn its restriction to specific values of $k$ and $k'$.

Let $n$ be an odd positive integer, and let $B_n$ be its binary representation. We know from Section~\ref{sec:long_collatz} that $k$, the number of iterations in the first loop of the long Collatz step, is the number of $1$s at the right of $B_n$. Thus, for any $n$ with $k=1$, $B_n$ must end in $01$, if $k=2$, $B_n$ must end in $011$, and so on.

Suppose $k=1$, then $B_n$ ends in $01$ and $n=c_0=4m+1$. Let us compute its long Collatz successor. After one iteration of the first loop ($n \to \frac{3n+1}2$), we reach $c_1=\frac{3n+1}2=6m+2$, an even number. This is the apex: all future steps will be divisions by $2$, $k'\geq 1$ is the largest power of two dividing $c_1$, and we have $$\kappa(n)=c_{1+k'}=\frac{6m+2}{2^{k'}}.$$ 
Dividing $c_1$ by two yields $c_2=3m+1$. We now have two cases.

\emph{(1)} If $m$ is even, $c_2=3m+1$ is odd. Then $c_2=\kappa(n)$ and $k'=1$. Rewriting $m=2p$, we have $n=8p+1$. Therefore, $B_n$ ends in $001$. This is the first class in the learning pattern, which contains all odd integers for which $k=k'=1$. 

\emph{(2)} If $m$ is odd, we can write $m=2p+1$ and $B_n$ ends in $101$. Then $c_2=6p+4$, an even integer, so $k'\geq 2$, and $c_3=3p+2$. At this point, we again have two cases.

\emph{(2a)} If $p$ is odd, $c_3=3p+2$ is odd, so $c_3=\kappa(n)$ and $k'=2$. $B_n$ ends in $1101$. This is the third class in the learning pattern: inputs for which $k=1$ and $k'=2$.

\emph{(2b)} If $p$ is even, $B_n$ ends in $0101$, $c_3=3p+2=6q+2$ is even, so $k'\geq 3$, and $c_4=3q+1$. We once again have two cases but since $c_4 = 3q+1$, they are the same as the ones we discussed for $c_2$, and the same reasoning applies.

Iterating, we see that, for inputs with $k=1$, i.e. $B_n$ ending in $01$, we have $k'=1$ iff $B_n$ ends in $001$, $k'=2$ iff $B_n$ ends in $1101$, $k'=3$ iff $B_n$ ends in $00101_2$, and $k'\geq 4$ if  $B_n$ ends in $10101_2$. By induction, we can prove that any $k'$ can be deduced from $B_n$, with the following rules:

\begin{itemize}[nosep]
\item $k'=2p+1$ if $B_n$ ends in $0(01)^p01$ for some $p\geq 0$,
\item $k'=2p+2$ if $B_n$ ends in $11(01)^p01$ for some $p \geq 0$.
\end{itemize}

We can reformulate this as a general property of integers $n \equiv 1 \mod 4$.
\begin{property}
Let $n \equiv 1 \mod 4$, its binary representation $B_n$ ends in $01$. Consider the binary sequence $S_1=(01)^p01$, with $p$ chosen large enough that $S_1$ is longer than $B_n$, and let $q\geq 2$ be the longest matching suffix between $B_n$ and $S_1$, then $k=1$ and $k'=q-1$. 
\end{property}

This property is an instance of a more general theorem, valid for all $k$, that we will state and prove. Let $\Phi(n)$ stand for the Euler function, $|x|$ for the length of the binary sequence $x$, and let addition of binary sequences stand for concatenation. We first prove the lemma.

\begin{lemma}{\label{lemm:a_p}}
For any $l>0$, the sequence $a_{l,p}=2^{-p} \cdot (3^l-1)\mod 3^l$ is periodic, with period  $\Phi(3^l)=3^l-3^{l-1}$.
\end{lemma}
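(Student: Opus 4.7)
The plan is to reduce the periodicity statement to a question about the multiplicative order of $2$ modulo $3^l$. Since $3^l-1$ is coprime to $3^l$ (it is a unit in $\mathbb{Z}/3^l\mathbb{Z}$), the equality $a_{l,p+T} \equiv a_{l,p} \pmod{3^l}$ for all $p \geq 0$ is equivalent to $2^{-T} \equiv 1$, i.e. $2^T \equiv 1 \pmod{3^l}$. Thus the fundamental period of $(a_{l,p})_p$ is exactly $\mathrm{ord}_{3^l}(2)$, and it suffices to prove that $2$ is a primitive root modulo $3^l$, that is, $\mathrm{ord}_{3^l}(2) = \Phi(3^l) = 2 \cdot 3^{l-1}$.

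I would proceed by induction on $l$. The base case $l=1$ is immediate: $2^2 \equiv 1$ and $2 \not\equiv 1 \pmod 3$, so $\mathrm{ord}_3(2)=2=\Phi(3)$. For the inductive step I would strengthen the hypothesis and prove simultaneously that $2^{\Phi(3^l)} = 1 + 3^l m_l$ with $\gcd(m_l,3)=1$. This holds at $l=1$ since $2^2 = 1 + 3$, so $m_1 = 1$. For the step, raising to the third power and expanding $(1+3^l m_l)^3$ by the binomial theorem yields
$$2^{\Phi(3^{l+1})} = 1 + 3^{l+1} m_l + 3^{2l+1} m_l^2 + 3^{3l} m_l^3.$$
For $l \geq 1$, both $2l+1$ and $3l$ are at least $l+2$, so the last two terms lie in $3^{l+2}\mathbb{Z}$, giving $m_{l+1} \equiv m_l \not\equiv 0 \pmod 3$.

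Given this, $\mathrm{ord}_{3^{l+1}}(2)$ divides $\Phi(3^{l+1}) = 2 \cdot 3^l$ and is a multiple of $\mathrm{ord}_{3^l}(2) = 2 \cdot 3^{l-1}$ (since any relation $2^T \equiv 1 \pmod{3^{l+1}}$ reduces modulo $3^l$). The only two candidates are therefore $2 \cdot 3^{l-1}$ and $2 \cdot 3^l$, and the strengthened inductive claim $2^{2 \cdot 3^{l-1}} = 1 + 3^l m_l$ with $3 \nmid m_l$ rules out the former, forcing $\mathrm{ord}_{3^{l+1}}(2) = 2 \cdot 3^l = \Phi(3^{l+1})$ as required.

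The main obstacle is maintaining the non-vanishing invariant $3 \nmid m_l$ through the induction; the rest is bookkeeping with the binomial expansion. This is precisely the classical argument that a primitive root $g$ modulo a prime $p$ lifts to a primitive root modulo $p^l$ whenever $g^{p-1} \not\equiv 1 \pmod{p^2}$, here specialised to $p=3$, $g=2$.
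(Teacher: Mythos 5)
Your proof is correct and follows the same reduction as the paper: factor out the unit $3^l-1$ (so the period of $(a_{l,p})_p$ is exactly the multiplicative order of $2$ modulo $3^l$) and then invoke the fact that $2$ is a primitive root modulo $3^l$. The one place you go further is that the paper simply asserts ``$2$ is a primitive root of $3^l$'' as a known fact, whereas you actually prove it by the standard Hensel-style lifting argument (maintain $2^{\Phi(3^l)} = 1 + 3^l m_l$ with $3 \nmid m_l$ through the induction); that is a worthwhile addition, not a deviation in approach.
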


\begin{proof}
Since $3$ is prime, for any $l\geq 1$, the multiplicative group $\mathcal G=(\mathbb Z / 3^l \mathbb Z)^\times$ is cyclic, and has order $\Phi(3^l)$. In fact, $\mathcal G$ includes all the positive integers up to $3^l-1$, co-prime with $3^l$. Since $2$ is a primitive root of $3^l$, the sequence of remainders modulo $3^l$ of its positive powers, $(2^p \mod 3^l)_{p\geq 0}$, generates $\mathcal G$, as does the sequence of its negative powers $(2^{-p} \mod 3^l)_{p\geq 0}$. Finally, since $3^l-1$ is co-prime with $3^l$, it is an element of $\mathcal G$, and $(3^l-1)\mathcal G = \mathcal G$. Therefore, the sequence $(2^{-p} \cdot (3^l-1) \mod 3^l)_{p\geq 0}$ generates $\mathcal G$, and is periodic with period $\Phi(3^l)$.
\end{proof}


\begin{theorem}\label{theo:kkprime}
 Let $H_l$ be the sequence of parities of $(a_{l,p})_{0\leq p \leq \Phi(3^l)}$, written in reverse order: $$H_{l,p} = a_{l,\Phi(3^l)-p-1} \mod 2,$$ 
Let $n$ be an odd positive integer larger than $1$, with binary representation $B_n$. Let $2^l$ be the largest power of two dividing $n+1$, and $S_l$ the binary sequence $S_l=(H_l)^m1^l$, with $m$ chosen so that $|S_l|>|B_n|$, then
\begin{enumerate}[nosep]
    \item[i.] $B_n$ shares with $S_l$ a common suffix of length $s \geq l+1$, 
    \item[ii.] the long Collatz step of $n$ verifies $k=l$ and $k'=s-l$.
\end{enumerate}
\end{theorem}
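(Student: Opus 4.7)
My plan is to compare $B_n$ and $S_l$ bit by bit (counting from the right, starting at position $0$, and implicitly padding $B_n$ with leading zeros) and reduce the theorem to an identity for the $2$-adic expansion of $3^{-l}$. Writing $v_2$ for the $2$-adic valuation, the claim $k = l$ is immediate from the definition of $k$ and the hypothesis $n + 1 = 2^l m$ with $m$ odd. By the property established in Section~\ref{sec:long_collatz}, the apex is $a = 3^l m - 1$, so $k' = v_2(3^l m - 1)$; equivalently, $k'$ is the largest $q$ with $m \equiv 3^{-l}\pmod{2^q}$, where $3^{-l}$ denotes the inverse in $\mathbb{Z}_2$.

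Decoding the bits, $B_n$ has $1$s at positions $0,\ldots,l-1$, a $0$ at position $l$ (since $m$ is odd), and for $j \geq 1$ the bit at position $l+j$ of $n$ equals bit $j$ of $m$. For $S_l = (H_l)^m\,1^l$, positions $0,\ldots,l-1$ are $1$s, and unwinding the reversal in $H_{l,p} = a_{l,\Phi(3^l)-p-1}\bmod 2$ together with the periodicity of Lemma~\ref{lemm:a_p} shows that the bit at position $l+j$ of $S_l$ equals $h_j := a_{l,j}\bmod 2$ for every $j \geq 0$. Since $a_{l,0} = 3^l - 1$ is even, $h_0 = 0$ matches the bit of $B_n$ at position $l$, giving $s \geq l+1$ and proving (i).

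For (ii), the heart of the argument is the integer identity
\[
a_{l,q} \;=\; \frac{3^l\mu_q - 1}{2^q} \in \{1,\ldots,3^l-1\}, \qquad \mu_q := 3^{-l}\bmod 2^q,
\]
for $q \geq 1$. To prove it, let $y_q := 2^{-q}\bmod 3^l$; the congruences $3^l\mu_q \equiv 1\pmod{2^q}$ and $2^q y_q \equiv 1\pmod{3^l}$, together with the size bounds $\mu_q < 2^q$ and $y_q < 3^l$, force $3^l\mu_q + 2^q y_q = 1 + 2^q 3^l$ by CRT, and then $a_{l,q} = (3^l\mu_q-1)/2^q$ follows from $a_{l,q} \equiv -y_q \pmod{3^l}$ and the range of $a_{l,q}$. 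Two consequences: first, the standard $2$-adic lifting $\mu_{q+1} = \mu_q + b_q 2^q$ reduced modulo $2^{q+1}$ shows that the $q$-th binary digit $b_q$ of $3^{-l}\in\mathbb{Z}_2$ equals $a_{l,q}\bmod 2 = h_q$; second, $k' = q$ is equivalent to bit $j$ of $m$ equalling $h_j$ for all $1 \leq j < q$ and bit $q$ of $m$ differing from $h_q$ (bit $0$ always matches, as $m$ and $3^{-l}$ are both odd).

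Combining with the decoding step, the bits of $B_n$ and $S_l$ agree at position $l+j$ (for $j \geq 1$) iff bit $j$ of $m$ equals $h_j$; hence the first mismatch beyond position $l$ occurs at position $l + k'$, giving $s = l + k'$ and proving (ii). The main obstacle I anticipate is the bookkeeping in the decoding step --- verifying that the reversal in $H_l$ and the concatenation in $(H_l)^m$ really place $h_j$ at position $l+j$ from the right of $S_l$, and consistently extending $B_n$ by leading zeros so that all bit positions up through $|S_l|$ are defined.
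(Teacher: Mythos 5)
Your proof is correct and takes a genuinely different route from the paper's. The paper proves part (ii) by a direct induction along the down-steps of the Collatz dynamics: after noting that $a_{l,\cdot}$ satisfies the recurrence $a_{l,i+1} = a_{l,i}/2$ (if even) or $(a_{l,i}+3^l)/2$ (if odd), it writes $c_{l+i} = 3^l p_i + q_i$ and shows inductively that $q_i = a_{l,i}$ while peeling the bits of $B_n$ as $B_n = B_{p_i}+h_{i-1}\cdots h_0 1^l$; the termination condition (that $c_{l+k'}$ is odd, so $p_{k'}$ and $q_{k'}$ have opposite parities) then forces the mismatch at position $l+k'$. Your argument sidesteps the induction entirely. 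The CRT identity $a_{l,q} = (3^l\mu_q - 1)/2^q$, with $\mu_q = 3^{-l}\bmod 2^q$, delivers two facts at once: the bits $h_q$ (for $q\geq 1$) are the binary digits of $3^{-l}\in\mathbb{Z}_2$, and $k' = v_2(3^l m - 1)$ is the length of $2$-adic agreement between $m$ and $3^{-l}$. This makes the role of $H_l$ structurally transparent --- matching $B_n$ against $S_l$ is literally computing a $2$-adic valuation --- and would transfer easily to generalizations where $3$ is replaced by another odd constant. What the paper's route buys in exchange is self-containedness: it never leaves the Collatz recursion and needs no $2$-adic or CRT machinery. The bookkeeping you flagged is in fact fine: position $l+j$ (from the right) of $S_l$ does carry $h_j$, by unwinding the reversal $H_{l,p}=a_{l,\Phi(3^l)-p-1}\bmod 2$ and applying the periodicity from Lemma~\ref{lemm:a_p}; and padding $B_n$ with leading zeros is the correct reading of ``common suffix'' (the paper uses the same convention silently; e.g.\ for $n=2^l-1$ one has $|B_n|=l<l+1$, and part (i) only holds after padding). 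In a polished writeup, do state explicitly that both the closed-form identity and the equality $b_q=h_q$ require $q\geq 1$ (indeed $b_0=1\neq 0=h_0$, a discrepancy you already neutralize by treating the constant position $l$ separately).
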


\begin{proof}
\\
{\it (i)} For any $l$, the rightmost bit of $H_l$ is the parity of $a_{l,0}= 3^l-1$, which is $0$. Therefore, the rightmost $l+1$ bits of $S_l$ are $01^l$. Since $n+1$ is divisible by $2^l$ but not by $2^{l+1}$, we can write $n+1=2^l \cdot (2m+1)$, and $n=2^{l+1}\cdot m+2^l-1,$ so the binary representation of $n$ can be written $B_n=B_m+01^l$. This agrees with $S_l$ on the $l+1$ last bits. Thus, if $s$ is the length of the longest common suffix between $B_n$ and $S_l$, we have $s\geq l+1$.  
\\
\\
{\it (ii)} Notice that the sequence $a_l$ satisfies the recurrence:
\begin{align*}
    a_{l,0} &= 3^l-1,\\
    a_{l,i+1} &= \frac{a_{l,i}}2, \text{      if } a_{l,i} \text{ is even,}\\
    a_{l,i+1} &= \frac{a_{l,i}+3^l}2,  \text{     if } a_{l,i} \text{ is odd. (Note: we always have } 0 \leq a_{l,i}<\Phi(3^l)).\\
\end{align*}

Let $k$ and $k'$ be the lengths of the two loops in the computation of $\kappa(n)$. Since $2^l$ is the largest power of two dividing $n+1$, we must have $k=l$ (Section~\ref{sec:long_collatz}). Let us compute $\kappa(n)$, we have $c_0=n=2^{l+1}m+2^l-1,$ and $B_n=B_m+01^l.$ After $k=l$ up-steps, we have $$c_{l}=3^l .2m +3^l-1.$$ 
The $k'$ next Collatz steps are divisions by $2$. Let us write for all $0\leq i \leq k'$, $$c_{l+i}=3^l p_i + q_i,$$ 
with $p_i \geq 0$ and $0 \leq q_i < 3^l$.

We have $p_0=2m$, $q_0=3^l-1=a_{l,0}$, $p_1=m$ and $q_1=\frac{3^l-1}{2}=a_{l,1}$.

Since all $c_{l+i}$ for $i<k'$ must be even, and $$c_{l+i+1}=\frac{c_{l+i}}{2},$$ $p_i$ and $q_i$ must have the same parity. Let us show by induction that $$q_i=a_{l,i},$$ $$B_n = B_{p_i}+ h_{i-1}\dots h_1h_01^l,$$ with  $h_j = H_{l,\Phi(3^l)-j-1}= a_{l,j} \mod 2,$ for all $i\leq k'$. \\
\\
Since $q_0=a_{l,0}=3^l-1,$ we have $h_0=0$, and $B_n=B_{2m}+1^l=B_{p_0}+1^l.$ Also, $q_1= \frac {q_0}2 = a_{l,1},$ and $B_n=B_m+01^k=B_m+h_01^k.$ 
The property is true for $i\leq 1$.
\\
\\
Suppose that for $i<k'$, $q_i=a_{l,i}$ and $B_n=B_{p_i}+ h_{i-1}\dots h_1h_01^l$. We consider two cases: 
\begin{enumerate}
    \item $q_i$ is even: $q_{i+1}=\frac{q_i}2=\frac{a_{l,i}}2=a_{l,i+1}$. Since $q_i$ is even, $p_i$ must be even, and since $$c_{l+i+1}=3^l p_{i+1}+q_{i+1}=\frac{c_{l+i}}{2} = \frac 1 2 (3^l p_i+ q_i)=3^l \frac{p_i}{2}+q_{i+1},$$ we must have $p_{i}=2p_{i+1}$, and, since $h_{i}=a_{l,i}\mod 2 = 0$,  $$B_{n}=B_{p_i}+ h_{i-1}\dots h_1h_01^l=B_{p_{i+1}}+ 0h_{i-1}\dots h_1h_01^l=B_{p_{i+1}}+h_{i}h_{i-1}\dots h_1h_01^l.$$ 
    \item $q_i$ is odd: then $p_i$ must be odd, we write $p_{i} = 2p_{i+1}+1$, and $$c_{l+i+1}=3^l p_{i+1}+q_{i+1}=\frac{c_{l+i}}{2} = \frac 1 2 (3^l p_i+ q_i)=3^l p_{i+1}+\frac{3^l+q_i}{2}=3^lp_{i+1}+a_{l,i+1},$$
    therefore, $q_{i+1}=a_{l,i+1}$, and, since $h_{i}=a_{l,i}\mod 2 = 1$, $$B_{n}=B_{p_i}+ h_{i-1}\dots h_1h_01^l=B_{p_{i+1}}+ 1h_{i-1}\dots h_1h_01^l=B_{p_{i+1}}+h_{i}h_{i-1}\dots h_1h_01^l.$$
\end{enumerate}
This concludes the induction. We have shown that for all $i\leq k'$, $q_i=a_{l,i}$, and that the binary representation of $n$ is $B_n=B_{p_{k'}}+h_{k'-1}\dots h_1h_01^l.$

Note that $h_{k'-1}\dots h_1h_01^l$ is the suffix of length $k'+l$ of $S_l$. This proves that $s \geq k'+l$.
Finally, since $c_{l+k'}=\kappa(n)$, $c_{l+k'}$ must be odd, and the parities of $p_{k'}$ and $q_{k'}=a_{l,k'}$ must be different. Now, $p_{k'} \mod 2$ is the $k'+l+1$ rightmost bit of $B_n$, $a_{l,k'} \mod 2$ is the $l+k'+1$ rightmost bit of $S_l$. This proves that $B_n$ and $S_l$ must disagree at that point, and that we must have $s<k'+l+1$. Therefore, $s=k'+l$.
\end{proof}

Theorem~\ref{theo:kkprime} states that, for any odd integer $n$, the number of steps in the two loops of the long Collatz step calculation, $k$ and $k'$, can be read from $B_n$, the binary representation of $n$, by matching it with a family of binary sequences $S_l=(H_l)^p1^l$, for $l>0$. The $H_l$ are binary sequences of length $\Phi(3^l)$, formed from the parities of the powers of $2$ modulo $3^l$. Their first values are $H_1=10$, $H_2=111000$ and $H_3=111101101000010010$. For instance, 
an integer with binary representation ending in $000011_2$ must have $k=2$, and since $S_2=(111000)^p11$, it matches $S_2$ on the last $s=5$ bits, so $k'=5-3=3$. An integer with binary representation ending in $001101000010010111$ ($k=3$) matches the last $17$ bits of $S_3=(111101101000010010)^p111$, therefore $k'=14$. $S_l$ is the binary representation of an odd integer that has $k=l$ and $k'$ maximal. If the binary representation of $n$ matches the last $p>l$ bits of $S_l$, then $k' \geq p-l$. 

The sequences $H_l$ appear in other works on the Collatz conjecture. \citet{colussi2011} relates them with the convergence classes of the Collatz sequence: integers reaching $1$ after $l$ iterations of the transformation $n \to (3n+1)/2^h$. \citet{sterin2020} relates them with the Collatz ancestors of particular integers.

Most importantly, theorem~\ref{theo:kkprime} provides an explanation for the learning pattern. For any pair $(k,k')$, all odd integers $n$ for which $$\kappa(n)=\frac{{(\frac3 2)}^k (n+1) -1}{2^{k'}}$$
share a binary suffix, the $k+k'+1$ last bits of $S_k$. For $k=k'=1$, this common suffix is $001_2$, for $k=1$, $k'=2$, it is $1101_2$, for $k=2$, $k'=1$, it is $1011_2$. Every class in the learning pattern corresponds to a specific pair $(k,k')$: the learning pattern amounts to learning the long Collatz step one pair $(k,k')$ at a time.
\\

\begin{itemize}[nosep]
\item Models achieving $25\%$ accuracy ($B=3$) have learned to predict the long Collatz successors for $(k,k')= (1,1)$ ($25\%$ of all input).
\item Models achieving $37\%$ accuracy ($B=11$)  predict $(1,1)$ and $(2,1)$ ($37.5\%$ of input).
\item Models achieving $55\%$ accuracy ($B=5, 13, 19, 29, 35, 37, 53$)  predict $(1,1)$, $(2,1)$, $(1,2)$ and $(3,1)$, ($56.25\%$ of input).
\item Models achieving $71\%$ accuracy predict  $(1,1), (2,1), (1,2), (3,1), (1,3), (2,2)$ and $(4,1)$ ($71.825\%$ of input).
\end{itemize}

Notice that classes are learned in increasing order of $k+k'$, a consequence of Theorem~\ref{theo:kkprime}.

\begin{corollary}\label{corr:prob}
Over all odd positive integers, the random variables $k$ and $k'$ are independent and their distribution verifies: $$\mathcal P(k=a)=\frac 1 {2^a},$$ $$\mathcal P(k'=b | k=a)=\frac 1 {2^b},$$ $$\mathcal P(k=a, k'=b)=\frac 1 {2^{a+b}}.$$ 
\end{corollary}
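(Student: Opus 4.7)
My plan is to deduce the corollary directly from Theorem~\ref{theo:kkprime}, which identifies the events $\{k=a\}$ and $\{k=a, k'=b\}$ as conditions on the binary suffix of $n$. I would interpret ``over all odd positive integers'' as the natural density on $\{1,3,\dots,2N-1\}$ as $N\to\infty$. A standard counting argument then shows that, among the $N$ odd integers below $2N$, the fraction whose binary representation ends in a prescribed suffix $w$ of length $p$ (with $w$ necessarily ending in $1$) converges to $2^{-(p-1)}$.

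First, I would compute $\mathcal{P}(k=a)$. By the discussion of Section~\ref{sec:long_collatz}, $k=a$ iff $2^a$ is the largest power of two dividing $n+1$, which is equivalent to the last $a+1$ bits of $B_n$ being $01^a$. This is a single suffix of length $a+1$, so its density among odd integers is $2^{-a}$.

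Next, for the joint law, Theorem~\ref{theo:kkprime} states that $\{k=a, k'=b\}$ is precisely the event that $B_n$ shares with $S_a$ a common suffix of length exactly $a+b$. Since the last $a+b$ bits of $S_a$ form a fixed binary string (ending in $1$, as $n$ is odd), and the $(a+b+1)$-th bit from the right of $n$ must be the flip of the corresponding bit of $S_a$, the event $\{k=a, k'=b\}$ is determined by a single prescribed suffix of length $a+b+1$. Its density among odd integers is therefore $2^{-(a+b)}$, giving $\mathcal{P}(k=a, k'=b) = 2^{-(a+b)}$. From this joint, the conditional and marginal laws follow mechanically: $\mathcal{P}(k'=b\mid k=a) = 2^{-b}$ (which does not depend on $a$), summing over $a$ yields $\mathcal{P}(k'=b) = 2^{-b}$, and the product formula $\mathcal{P}(k=a, k'=b) = \mathcal{P}(k=a)\mathcal{P}(k'=b)$ establishes independence.

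The one subtlety is that ``common suffix of length exactly $a+b$'' requires both a prescribed match on $a+b$ bits and a mismatch at the next position; but because $S_a$ is deterministic, the ``wrong'' value of the $(a+b+1)$-th bit is uniquely defined, so the event still corresponds to a single binary suffix. No deeper input is needed beyond Theorem~\ref{theo:kkprime}; the main task is the bookkeeping in setting up the density framework so that ``probability'' is rigorously defined over an infinite population.
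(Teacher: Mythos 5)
Your proof is correct, and since the paper states the corollary without a written proof, there is no argument to compare against directly; your route through Theorem~\ref{theo:kkprime} is the one the paper's organization clearly intends, and your bookkeeping is right: the event $\{k=a,\,k'=b\}$ is exactly ``$B_n$ agrees with $S_a$ on its last $a+b$ bits and disagrees at bit $a+b+1$,'' which pins down $a+b+1$ bits of $n$, of which the lowest is already $1$, giving density $2^{-(a+b)}$ on odd integers. You also correctly handle the subtlety that ``exactly length $a+b$'' forces one additional prescribed bit (the flip of the corresponding bit of $S_a$), and your remark that the whole thing should be read as a natural-density statement is the right formalization (the finitely many $n$ with $|B_n|\leq a+b$ are negligible).

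It is worth noting that the corollary also admits a shorter, self-contained proof that bypasses Theorem~\ref{theo:kkprime} entirely. The marginal $\mathcal P(k=a)=2^{-a}$ is immediate from $k(n)=v_2(n+1)$. For the conditional, fix $k=a$ and write $n=2^a m-1$ with $m$ odd; then $k'=v_2(3^a m-1)$. As $m$ ranges over odd residues modulo $2^{b+1}$, multiplication by the odd unit $3^a$ permutes them, so $3^a m-1$ is equidistributed over even residues, and the density of $v_2=b$ among even integers is $2^{-b}$, independently of $a$. This gives $\mathcal P(k'=b\mid k=a)=2^{-b}$ and hence independence, without ever invoking the suffix-matching machinery. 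Your version has the virtue of modularity (it treats the theorem as a black box, so a reader who trusts the theorem gets the corollary for free); the direct version is lighter and makes clear that the probabilistic statement is elementary and does not actually rest on the structure of the sequences $S_l$.
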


Since $k$ and $k'$ can be read from $B_n$, it comes to no surprise that models using power-of-two bases learn best, and that even-base models perform better than odd-base models.

\section{Patterns of model failures}\label{sec:errors}

In the previous section, we considered the inputs that are correctly predicted, and noticed that they correspond to integers associated with small values of $k$ and $k'$. For these $n$, the model correctly predicts $\kappa(n)$. We now turn to model failures: inputs $n$ that are not predicted as $\kappa(n)$.

Many prior works on transformers described \emph{hallucination}: models outputting irrelevant predictions when they fail. This phenomenon is well-documented in large language models, pre-trained without supervision on large text corpora. Prior work suggests that math transformers, smaller models trained with supervision, do not hallucinate. Models trained to predict the eigen decomposition of symmetric real matrices predict, even when they fail, the correct eigenvalues, and unitary, albeit non-orthogonal, eigenvectors in most cases \citep{charton2022}. Errors in models trained to predict GCD can be explained by three deterministic rules~\citep{charton2024gcd}. 

For the long Collatz step, we will show that most model errors can be ascribed to a small number of predictable cases. In fact, we will propose a generalized version of the learning pattern that accounts for most model predictions: almost all correct predictions, and about $90\%$ of errors. In all experiments, we investigate the predictions of trained models on a test set of 100,000 examples. For each input $n$, we compare the model prediction $p$ to the target value $t=\kappa(n)$, and especially  investigate the distribution of the ratio $r=p/t$.

\subsection{Errors in the best performing models}\label{sec:close_errors}

Let us first consider the models using bases $24$ and $32$. They achieve $99.7\%$ accuracy, $243$ and $265$ errors, respectively, on a test set of 100,000. As expected, the models fail for large values of $k$ and $k'$. In base $24$, inputs with $k\geq 11$ are never correctly predicted, and only $43\%$ of inputs with $k=10$ are correctly predicted. For $k=9$, there are $14$ errors and $41$ correct predictions for $k'=1$, $26$ errors and $5$ correct predictions for $k'=2$. For $k'\geq 3$, all predictions are incorrect. In base $32$, all inputs with $k>9$ are incorrectly predicted, and account for  $212$ of $265$ errors. Inputs with $k=9$ account for $39$, almost all remaining errors. 

In almost all errors, the ratio $r=p/t$ is very close to $1$. In base $24$, $93\%$ of incorrect predictions ($225$ out of $245$) are within $3\%$ of the correct value of $\kappa(n)$, $84\%$ ($204$) are within $1\%$ and $46\%$ ($111$) within $0.1\%$.
In base $32$, $93\%$ ($247$) are within $1\%$ of $\kappa(n)$. Whereas these failures happen for the same inputs $n$, model predictions vary for different bases. 

We investigate these close misses by comparing the sequence predicted by the model (the base-$B$ representation of $p$) with the base-$B$ representation of $\kappa(n)$. In most cases, they agree on more than half of the digits: the beginning and end of the two sequences are the same. For example, in base-$24$, the model predicts  \texttt{[4, 21, 20, 6, 7, 8, 8, 2, 20, 1]} instead of \texttt{[4, 21, 20, 6, 7, 3, 20, 2, 20, 1]}: the five first and three last digits are correct. Incorrect predictions are not only close to the target values, but they share the same residuals modulo $24^p$ for small values of $p$.

These observations generalize to all bases where models achieve high accuracy. Table~\ref{tab:error_goodbases} presents, for bases divisible by $8$ or $12$, the average discrepancy between correct and predicted sequences for all model errors. On average, about half of the tokens are correct, evenly shared between the beginning and the end of the sequence. These results indicate that models errors are not random hallucinations, and suggest that the models do a better job at predicting the long Collatz step than their accuracy figures suggest.

\begin{table}[t]
    \small
    \centering
    \begin{tabular}{lccccc}
        \toprule
          & Errors & Prediction length & Correct tokens & Correct prefix & Correct suffix  \\
        \midrule
        Base 24 & 243 & 9.4 & 4.5 & 2.1 & 2.3\\
        Base 32 & 265 & 8.9 & 4.2 & 1.9 & 2.1 \\
        Base 16 & 259 & 11.1 & 5.8 & 2.7 & 2.7 \\
        Base 8  & 479 & 14.5 &  8.0 & 3.5 & 3.6 \\
        Base 48  & 399 & 7.7 & 4.0 & 1.9 & 2.0  \\
        Base 36  & 287 & 7.7 & 3.3 & 1.5 & 1.6 \\
        Base 12  & 315 & 11.7 & 6.1 & 2.6 & 2.9 \\
       \bottomrule
    \end{tabular}
    \caption{\small Prediction errors for different good bases. Average length of predictions, number of tokens in agreement between prediction and target. }
    \label{tab:error_goodbases}
    \end{table}

\subsection{Power-of-two errors in odd-base models}

The long Collatz successor of an odd integer is always odd. In even-base models, this property is respected. All model prediction for bases $24$ and $32$ are odd. The largest number of even predictions, in an even-base models is $843$ for base $38$: $0.8\%$ of model predictions, and less than $5\%$ of errors. In an even base, the parity of outputs is easy to learn and control, as it only depends on the last token in the predicted sequence (which can only take values on half of the vocabulary).

No such shortcut for parity exists in odd bases\footnote{Rules for parity testing exist in odd bases, but they require a weighted modular sum over all digits, a task transformers struggle on~\citep{saxena2025}.}, and parity errors in odd-base models are frequent. In fact, they are \emph{too frequent}. In base $27$ ($71\%$ accuracy),
$25\%$ of test set predictions (25,082 out of 100,000) are even, and account for $88\%$ of model errors (all even predictions are errors). Similar proportions are found for other odd bases ($85\%$ for bases $25$ and $29$, $87\%$ for base $31$). The prevalence of even predictions in odd-base model is very surprising: a model guessing parity at random would only produce $50\%$ even predictions.

A look at the ratio $r=p/t$ settles this curious case. In about $70\%$ of model failures (Table~\ref{tab:pow2err}), the ratio is \emph{exactly equal} to a small power of $2$. Instead of predicting $t=\kappa(n)$, the model predicts, $2\kappa(n)$, $4\kappa(n)$ or $8\kappa(n)$, that is, while predicting the long Collatz step, $k'$, the number of iterations in the second loop, is underestimated by a few units.

\begin{table}[t]
    \scriptsize
    \centering
    \begin{tabular}{l|ccc||l|ccc}
        \toprule
        Base  & Accuracy &  Power-of-2 & Power-of-2 / errors & Base & Accuracy & Power-of-2 & Power-of-2 / errors\\
        \midrule
        33&88.6&8.0&69.8 & 7&70.2&20.4&68.5\\
        9&82.6&13.8&79.0  & 25&70.2&20.7&69.4\\
        45&82.6&13.6&77.9 & 57&65.3&27.7&79.9\\
        15&82.0&13.4&74.7 &  43&59.4&28.5&70.0\\
        17&81.9&13.6&75.4 & 13&56.9&28.6&66.5\\
        49&71.7&21.6&76.5 & 37&56.0&30.7&69.7\\
        47&71.5&21.3&74.7 & 19&55.9&30.9&70.2\\
        21&71.4&21.3&74.4 & 55&55.7&30.4&68.7\\
        27&71.4&21.6&75.5 & 29&55.6&31.0&69.8\\
        51&71.3&21.2&74.1 & 53&55.6&30.8&69.3\\
        39&71.1&21.1&72.7 & 35&55.5&31.1&69.8\\
        41&71.0&21.3&73.7 & 5&55.4&30.3&68.0\\
        31&71.0&21.2&73.0 & 11&37.3&37.2&59.3\\
        23&70.9&21.0&72.1 & 3&24.8&24.9&33.1\\
        \bottomrule
    \end{tabular}
    \caption{\small \textbf{Power-of-two errors} Accuracy (\%), Power-of-two errors (\% of all predictions), Power-of-two errors (\% of all errors).}
    \label{tab:pow2err}
\end{table}

These \emph{power-of-two errors} happen for specific values of $n$, associated with $k$ that the models can predict (i.e. inputs with $k'=1$ are correctly predicted for this $k$), but large $k'$, that are predicted as the largest value the model can predict. For instance, in base $27$, the model correctly  predicts inputs with $k=1$ and $k'\leq 3$, $k=2$ and $k'\leq 2$, and $k=3$ or $4$ and $k'=1$. Almost all ($99.9\%$) power-of-two errors happen for inputs with $k=1, k' \geq 4$, $k=2, k'\geq 3$, and $k=3$ or $4$ and $k'\geq 2$ (and $99\%$ of model predictions for these inputs are power-of-two errors). In all power-of-two errors, the model predicts $k'=3$ for $k=1$, $k'=2$ for $k=2$ and $k'=1$ for $k=3$ or $4$. 

In other words, power-of-two errors happen for values of $k$ that the model can predict correctly up to a certain value of $k'=l_k'$. Inputs associated with this value of $k$ are predicted as $p=2^{k'-l_k'}\kappa(n)$ (the largest $k'$ that the model has learned so far).

These observations apply to all odd bases. In base $9$ ($83\%$ accuracy), models correctly predict inputs with $k$ up to $5$. Power-of-two errors happen for $k=1$ and $k'\geq 5$ (the model then predicts $p=2^{k'-4}\kappa(n)$), $k=2$ and $k'\geq 4$ (predicting $p=2^{k'-3}\kappa(n)$), $k=3$ and $k'\geq 3$ ($p=2^{k'-2}\kappa(n)$), and $k=4$ or $5$ and $k'\geq 2$ ($p=2^{k'-1}\kappa(n)$). In base $11$ ($37\%$ accuracy), inputs with $k=1$, $k'=1$ and $k=2$, $k'=1$ are correctly predicted, all other inputs with $k\leq 2$ are power-of-two-errors, predicted as $p=2^{k'-1}\kappa(n)$. Inputs with $k>2$ are almost never correct predictions or power-of-two errors.

Power-of-two errors generalize the learning pattern, for odd-base models. For every model, there exists a value $k_{\text{max}}$, the largest $k$ for which inputs associated with the pair $(k_\text{max},1)$ are predicted with high accuracy. For $1\leq k \leq k_\text{max}$, let $l_k'$ be the largest value of $k'$ that the model can predict, then almost all inputs with $k' \leq l_k'$ are correctly predicted, and almost all inputs with $k'>l_k'$ are power-of-two errors, predicted as $p=2^{k'-l_k'} \kappa(n)$. 

Almost all inputs associated with $k> k_\text{max}$ are neither correct prediction, nor power-of-two errors. We call these \emph{hard} errors. On these, model predictions are even in about $50\%$ of cases. Therefore, power-of-two errors account for the larger number of even predictions in odd-models.

Once again, these results indicate that the model does a much better job at learning $\kappa(n)$ than its accuracy suggests. For almost all inputs with $k \leq k_\text{max}$, the model predicts $\kappa(n)$ up to a predictable power-of-two factor. For base $27$, these account for $93\%$ of model predictions. For base $19$, they account for $87\%$.

\subsection{Near power-of-two errors in even bases}

Figures~\ref{fig:ratio26} and~\ref{fig:ratio38} present the distribution of ratios ($r=p/t$) for all errors in base $26$ and $38$. Like accuracy levels, ratios concentrate around a small number of discrete values. Most of the ratios are very close to powers of two, $r\approx 2^l$, with $l\geq 1$. They are distributed as a power law:  $\mathcal P(r\approx 2^l) = \frac{K}{2^l}$. This is observed in all even bases, except those divisible by large powers of two, discussed in Section~\ref{sec:close_errors}. 

\begin{figure}[b]
    \centering
    \scriptsize
    \begin{minipage}{0.49\textwidth}
        \centering
        \includegraphics[width=\linewidth]{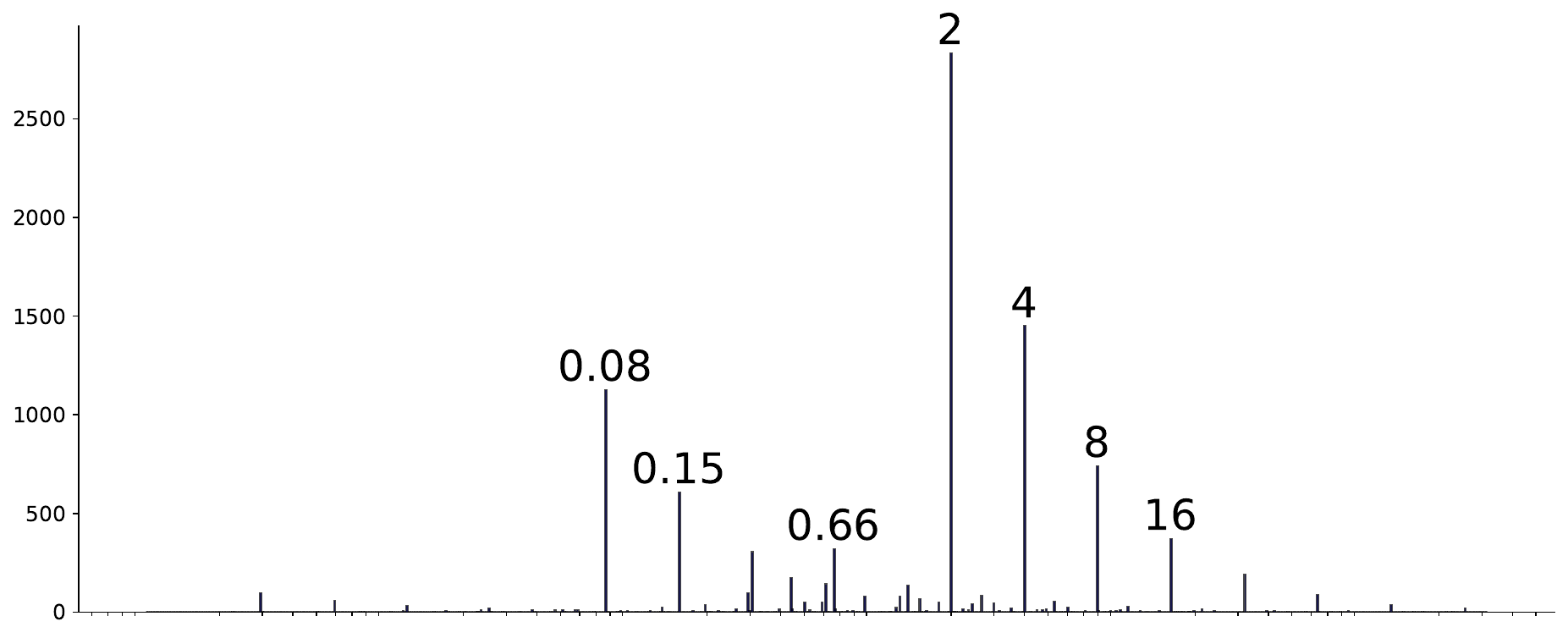} 
        \caption{\small Distribution of p/t: base 26 (all model errors)}\label{fig:ratio26}
    \end{minipage}\hfill 
    \begin{minipage}{0.49\textwidth}
        \centering
        \includegraphics[width=\linewidth]{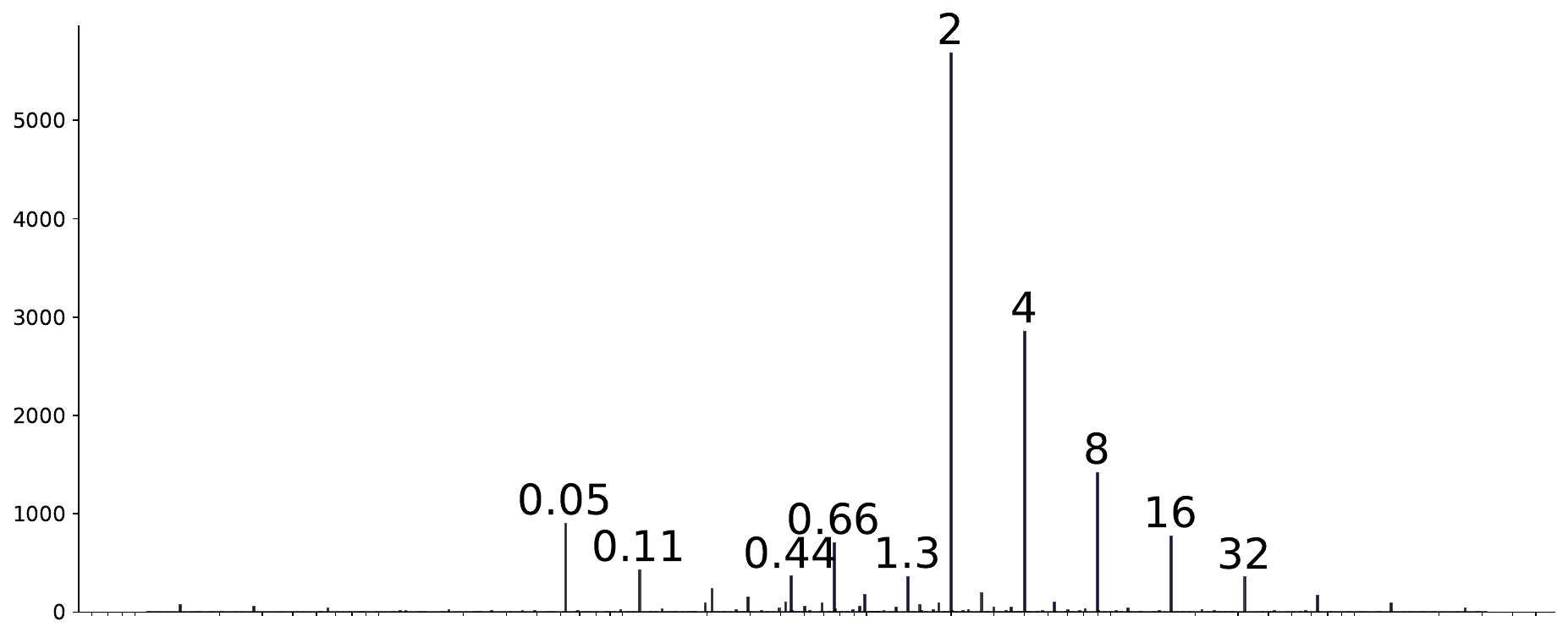} 
        \caption{\small Distribution of p/t: base 38 (all model errors)}\label{fig:ratio38}
    \end{minipage}
\end{figure}

Only a small fraction of these errors are power-of-two errors, that is, verify $p=2^lt$. Such exact power-of-two errors account for $328$ errors out of $10675$ in base $26$, and $561$ out of $16808$ in base $38$. All others are \emph{near power-of-two errors} (NP2E), which verify $p=2^l \kappa(n)+\epsilon$, for $\epsilon$ a small odd integer. We consider that all predictions verifying $\epsilon / p < 0.001$ are NP2E. NP2E account for $56\%$ of model errors in base $26$ ($6004$ out of $10675$), $64\%$ in base $22$, and $67\%$ in base $38$. They are the first cause of errors in non-power-of-two even-base models.

All $\kappa(n)$ in the training set are odd. In an even base, it means that all output sequences end with an odd digit. This inductive bias is learned by the model. Therefore, in almost all NP2E, the model predicts (wrongly) an odd integer, and $\epsilon$, which appears as a ``rounding error'' forced upon the model by the inductive bias, is odd. The distribution of $\epsilon$ is not random. In base $26$, $|\epsilon|$ takes the values $1, 13$ and $13^2=169$ in $80\%$ of NP2E. In base $38$, it takes the values  $1, 19$ and $19^2$, and in base $22$, the values $1, 11$ and $11^2$. Whereas the error pattern is a property of the long Collatz step, model predictions depend on the base.

In another common error pattern, the ratio $r=p/t$ has the form $\frac{2^l}{B}$. In these NP2E errors, the last odd digit in the output sequence is replaced by the \texttt{<end-of-sentence>} (EOS) token, and the output is cut short. In base $26$, we have $r \approx \frac{2^l}{B}$ in $2322$ cases ($23\%$ of errors), and $r \approx \frac{2^l}{B^2}$ in $215$ cases ($2\%$ of errors). In base $22$, these account for $16$ and $2\%$ of errors, in base $38$ for $10$ and $1\%$. 

Overall, near power-of-two errors account for $81\%$ of model errors for base $26$, $82\%$ for base $22$ and $78\%$ for base $38$. The proportion of NP2E in even bases is similar to the proportion of power-of-two errors in odd bases. 

All NP2E follow the same learning pattern as power-of-two errors in odd bases. For an input $n$, with loop lengths $k$ and $k'$, let $k_\text{max}$ be the largest $k$ that the model can predict correctly, and let, for any $k\leq k_\text{max}$, $l_k'$ be the largest value of $k'$ that the model predicts for that value of $k$. All NP2E happen when $k\leq k_\text{max}$ and $k'>l_k'$, and the power-of-two associated with the NP2E is $2^{k'-l_k'}$. In base $26$, NP2E represent $97.5\%$ of model errors for inputs with $k\leq k_\text{max}$ and $k'> l'_k$.

\subsection{Hard errors}

Power-of-two errors, exact or near, account for about $80\%$ of all model errors, and almost all errors for inputs with $k\leq k_\text{max}$, for all bases except the best performing. We now investigate the remaining \emph{hard errors}. 
Figures~\ref{fig:ratio226} to~\ref{fig:ratio19} present the distribution of $p/t$ for hard errors, for bases $26, 38, 27$ and $19$. The distributions are discrete, and take values of the form $r \approx (\frac{2}{3})^a 2^l$ with probability $$\mathcal P ( r\approx (\frac{2}{3})^a 2^l ) = \frac{K}{2^l},$$ for small positive values of $a$ and $l\geq0$. For base $27$, we also find ratios of the form $r \approx (\frac{2}{3})^a \frac{2^l}B$.


\begin{figure}[h]
    \centering
    \scriptsize
    \begin{minipage}{0.49\textwidth}
        \centering
        \includegraphics[width=\linewidth]{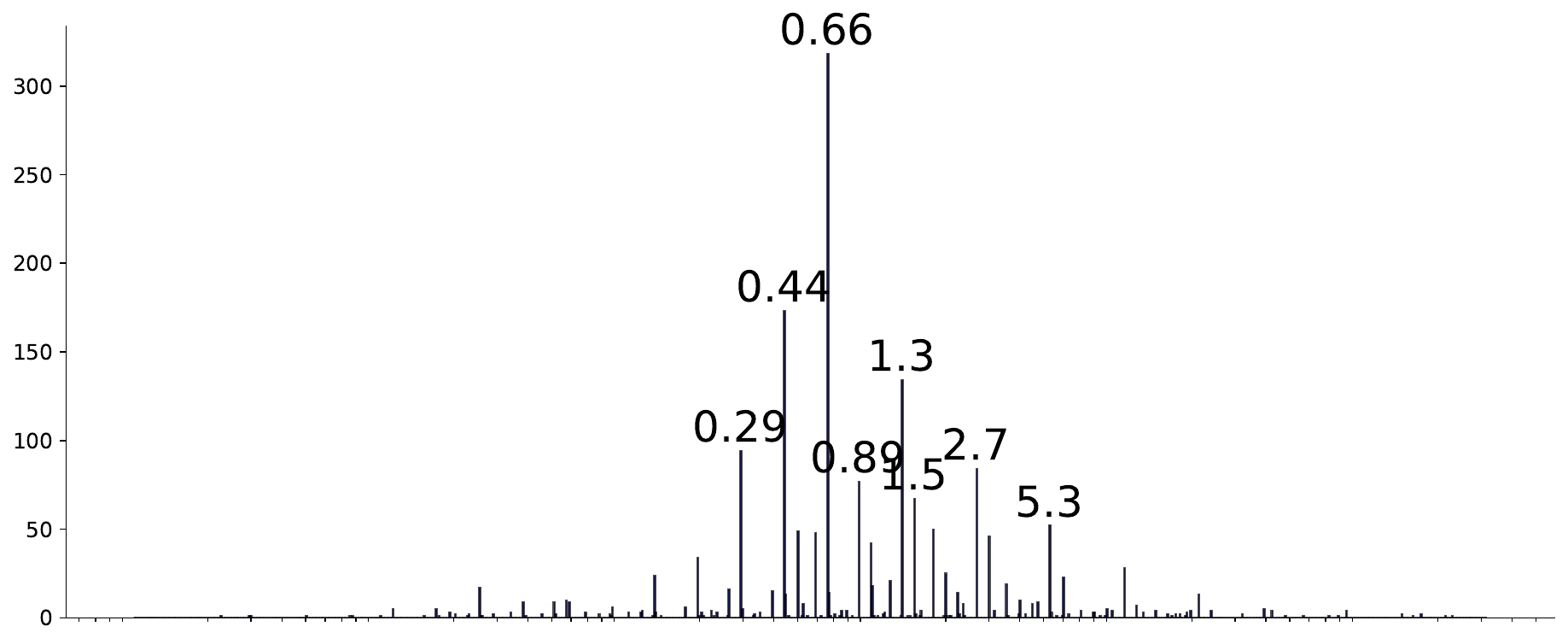} 
        \caption{\small Distribution of p/t: base 26 (all model errors, excluding near power of two)}\label{fig:ratio226}
    \end{minipage}\hfill 
    \begin{minipage}{0.49\textwidth}
        \centering
        \includegraphics[width=\linewidth]{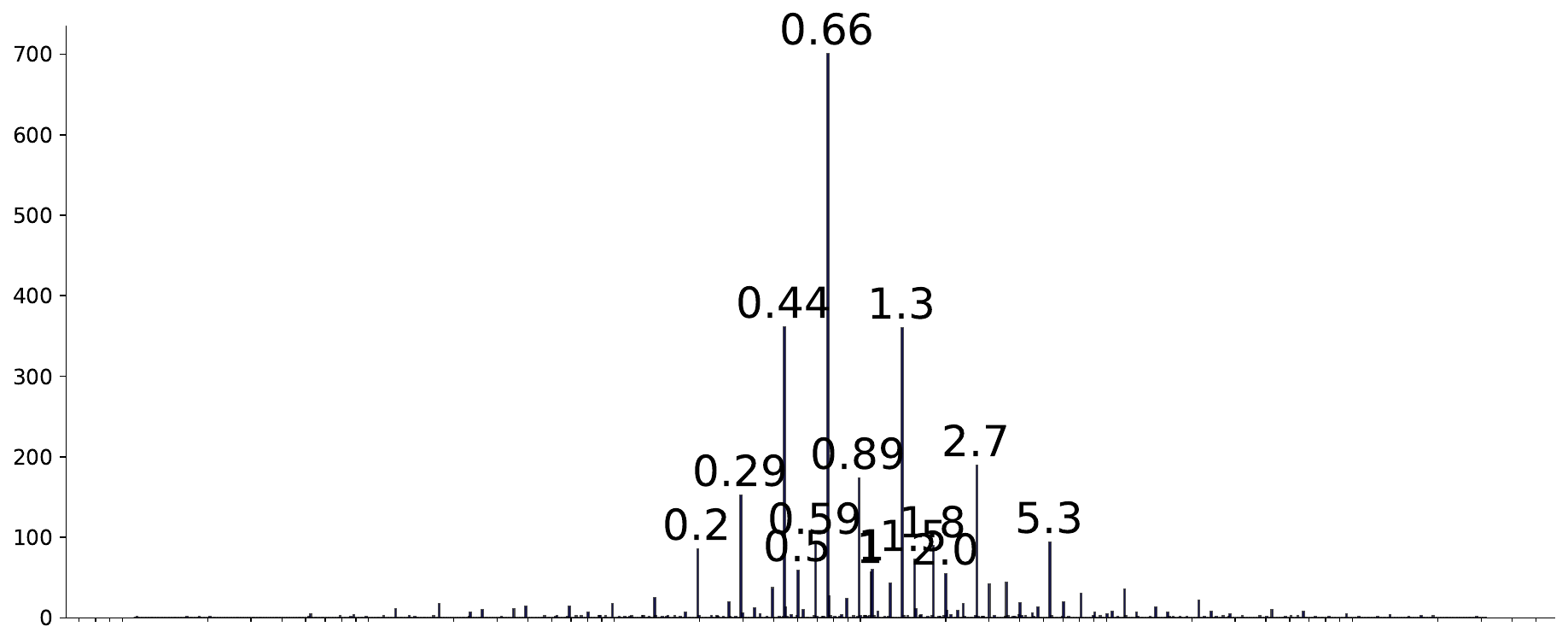} 
        \caption{\small Distribution of p/t: base 38 (all model errors, excluding near power of two)}\label{fig:ratio238}
    \end{minipage}
\end{figure}

\begin{figure}[h]
    \centering
    \begin{minipage}{0.49\textwidth}
        \centering
        \includegraphics[width=\linewidth]{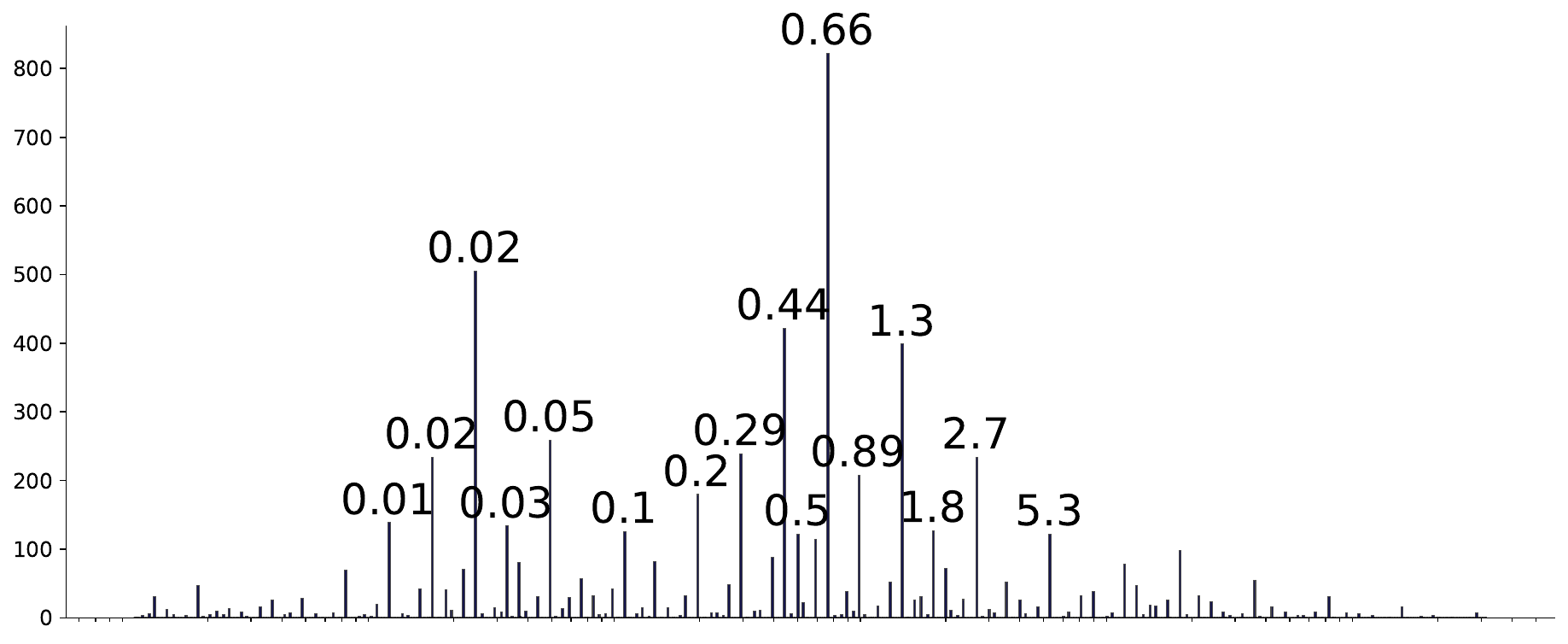} 
        \caption{\small Distribution of p/t: base 27 (all model errors, excluding power of two)}\label{fig:ratio27}
    \end{minipage}\hfill
    \begin{minipage}{0.49\textwidth}
        \centering
        \includegraphics[width=\linewidth]{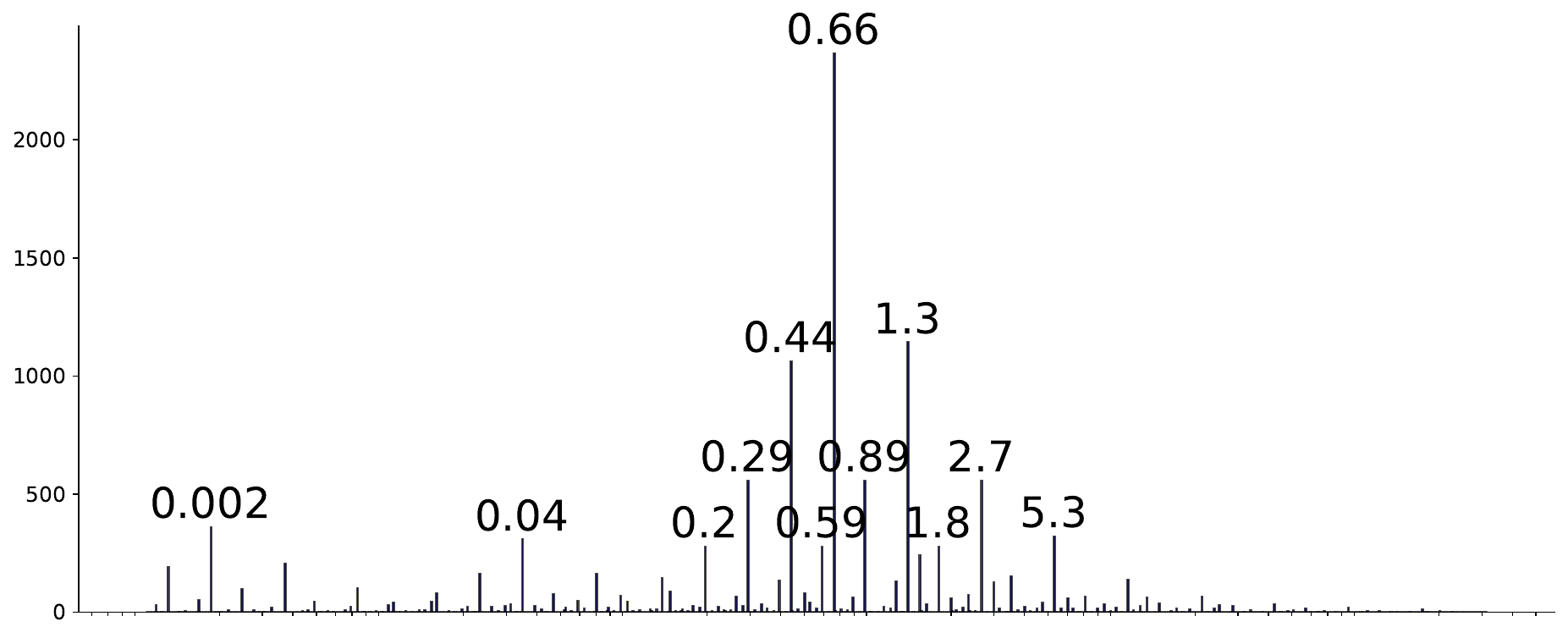} 
        \caption{\small Distribution of p/t: base 19 (all model errors, excluding power of two)}\label{fig:ratio19}
    \end{minipage}
\end{figure}

Just as the power-of-two errors happen when the model underestimates $k'$, these hard errors correspond to situations when $k$ is underestimated as $k-a$. 

In base $26$, we have $k_\text{max}=6$, and $k>k_\text{max}$ in $1545$ test examples, all predicted incorrectly. There are $36$ close misses, $36$ near power-of-two errors, and $1473$ hard errors. We have $p\approx \frac{2}{3} 2^l t$ ($k$ underestimated by $1$) in $607$ cases, and $p\approx \frac{2}{3B} 2^l t$ ($k$ underestimated by $1$, and the sequence truncated by one token) in $31$. These account for $43\%$ of hard errors. $k$ is underestimated by $2$ in $319$ cases ($22\%$ of hard errors), by $3$ in $185$, by $4$ in $65$, \&c. 

In all cases, the model predicts (up to factor $B$) as if $k=k_\text{max}$ and $k'=1$, and $$p\approx \frac{(\frac 3 2)^{6}(n+1)-1}{2},$$
this accounts for $82\%$ or hard errors, but most of the remaining $289$ errors follow similar patterns. In $66$ cases, the model predicts as if $k=7$ and $k'=1$, in $13$ $k=8$ and $k'=1$, in $8$, $k=6$ and $k'=2$. All model predictions correspond to incorrect guesses of $k$ and $k'$.

This error pattern is observed in all bases. For base $38$ ($k_\text{max}=5$), the model predicts $k=5$, $k'=1$ (up to a division by $B$) in $86\%$ of hard errors.  In base $19$ ($k_\text{max}=3$), the same pattern accounts for $90\%$ of hard errors. In base $27$ ($k_\text{max}=3$), it accounts for $92\%$.

Summarizing, for inputs with $k>k_\text{max}$, $80$ to $95\%$ of hard errors (depending on the base) are due to the model predicting $k=k_\text{max}$ and $k'=1$. Infrequently, the model prediction is cut by one (rarely two) tokens.

\subsection{A hierarchy of errors --  the learning pattern revisited}\label{sec:hierarchy}

This analysis of failure cases provides a broader perspective on the learning pattern. It appears that instead of learning $\kappa(n)$ for all $n$, or an approximation of it, all models learn a collection of functions $\kappa_{k,k'}$, that are equal to $\kappa$ over integers $n$ associated with $k$ and $k'$.

In other words, instead of learning the function $$\kappa(n)= \frac {(\frac 3 2)^{k(n)}(n+1)-1}{2^{k'(n)}},$$
where $k(n)$ and $k'(n)$ can be derived from the binary representation of $n$, as in Section~\ref{sec:theory}, 
the model learns a sequence of functions $$\kappa_{l,l'}(n)= \frac {(\frac 3 2)^l(n+1)-1}{2^{l'}},$$ derived from $\kappa$ by replacing the functions $k(n)$ and $k'(n)$ by the constants $l$ and $l'$. We have $\kappa(n) = \kappa_{l,l'}(n)$ for all $n$ in $$\mathcal D_{l,l'} = \{n=2p+1, p \in \mathbb N \mid k(n)=l, k'(n)=l' \}.$$ 

When training begins, the model predicts for all $n$ an approximation of $$\kappa_{1,1}(n) = \frac {\frac 3 2(n+1)-1}{2}.$$ This is equal to $\kappa(n)$ on $\mathcal D_{1,1}$, resulting in an accuracy of $25\%$. 

As training progresses, new values of $k$ and $k'$ are learned, in increasing order of $k+k'$. Once the model has learned all $k$ up to $k_\text{max}$, and all $k'$ up to $l_k'$ for a given value of $k$, it predicts (exactly) $\kappa_{k,k'}(n)=\kappa(n)$ for all $n$ such that $k \leq k_\text{max}$ and $k' \leq l'(k)$ (all $n$ in $\mathcal D_{k,k'}$). 
For larger $k<k_\text{max}$ and $k'>l_k'$, the model predicts $\kappa_{k,l_k'}(n)$. For $k>k_\text{max}$, it predicts $\kappa_{k_\text{max},1}(n)$. These calculations are approximate, and subject to rounding errors. In a small number of cases, the model predicts $\frac 1 B\kappa_{k,k'}(n)$, or $\frac 1 {B^2} \kappa_{k,k'}(n)$.

These rules account for more than $99\%$ of model predictions for $k \leq k_\text{max}$, and $85$ to $95\%$ for $k> k_\text{max}$. Therefore, the updated learning pattern provide an almost complete explanation of model predictions.

\section{Ablation studies}\label{sec:ablations}

We now investigate a number of questions raised by the experiments from Section~\ref{sec:base}. We first consider simpler versions of the problem: learning the first loop of the algorithm (predicting the apex instead of the long Collatz successor), and learning the loop lengths (predicting $k(n)$ and $k'(n)$). We then investigate the hypothesis that models learn the long Collatz step by first learning to translate their input into binary representation. Finally, we study the impact of different training distributions.

\subsection{Simpler versions of the problem: predicting the apex and the loop lengths}\label{sec:predkkp}

The long Collatz step is a complex arithmetic operation. It features two loops of variable lengths, $k(n)$ and $k'(n)$, that the model must learn to compute the long step. We now consider two simpler versions of this task: predicting the apex $c_{k}={(\frac3 2)}^k (n+1) -1$, which amounts to computing the first loop only, and predicting the lengths of the loops, the pair $(k,k')$. We train models with the same architecture and number of parameters as in the main experiments, on random odd integers from $1$ to $10^{12}$, encoded in base $2$ to $57$.

\begin{figure*}[!b]
\begin{minipage}{0.6\textwidth}
    \scriptsize
    \centering
    \begin{tabular}{ll}
        \toprule
         Accuracy & Bases  \\
        \midrule
        99.8 -- 99.9\% & 36, 32, 24, 48, 12, 8 \\
        99.6 -- 99.7\% & 6, 40, 4, 56, 52, 20  \\
        99.0 -- 99.5\% & 16, 18, 28, 44, 42 \\
        98.4 -- 98.7 \% & 10, 30, 22, 34 \\
        96.6 --  97\% & 54, 38, 26, 50, 14,  2 \\
        93.9\% &  46, 27, 7  \\
        87.5 -- 87.8\% & 57, 35, 23, 33, 25, 3, 9, 11, 39, 17, 45, 37, 47, 29, 49, 43 \\
        85.6 \% & 51 \\
        75.4 -- 75.5 \% & 21, 19, 15, 53, 41, 55, 31, 5 \\
        50.5\% & 13 \\
       \bottomrule
    \end{tabular}
    \small
    \captionof{table}{\small Apex prediction, model accuracy. Bases listed by decreasing accuracy, after 400 million examples.}
    \label{tab:apex}
\end{minipage}
\hfill
\begin{minipage}{0.35\textwidth}
\includegraphics[width=0.92\textwidth]{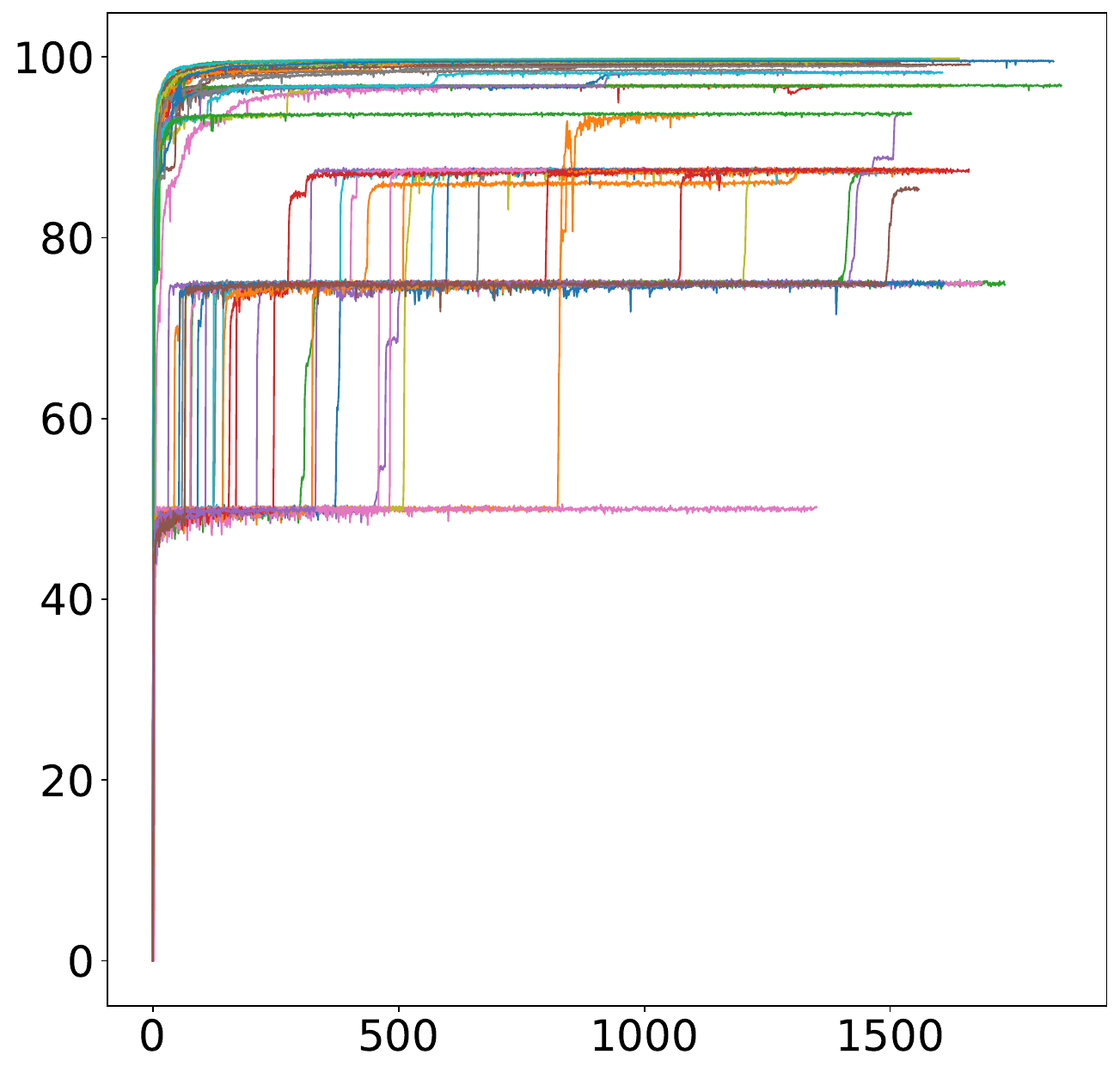}
\captionof{figure}{\small  Leaning curves for apex prediction. Bases 2 to 57.}
    \label{fig:apex}
\end{minipage}\\
\end{figure*}

\paragraph{Learning the first loop.} On this simple task, model accuracies are higher on average than when learning $\kappa(n)$, the long Collatz successor (Table~\ref{tab:apex}). Even-base models perform better than odd-base models, and the best performances are achieved by bases multiples of $12$. Learning curves (Figure~\ref{fig:apex}) keep their step-like shape, and accuracy levels take a few quantized values, close to  $1-\frac 1 {2^n}$ ($50, 75, 87.5, 93.8, 96.8, 98.4,  99.2, 99.6, 99.8\%$). This suggests a learning pattern where models learn to predict all input up to a certain $k$. 

An analysis of model errors confirms this hypothesis. Base-$13$ models ($50\%$ accuracy) predict $p = \frac{3n+1}2$ for almost all $n$, as if $k=1$ for all $n$. Models with $75\%$ accuracy (e.g. base $21$) predict $p = \frac{3n+1}2$ for $k=1$ and $p=\frac{9n+5}4$ for $k\geq 2$. Models with $87.5\%$ accuracy predict all $k\leq 3$ and so on. The best models predict all inputs with $k$ up to $9$. 

Interestingly, there is no qualitative difference between models trained to predict the long Collatz step and models trained to predict the apex. The presence of only one loop, the length of which is easier to read from the binary representation of $n$, does not result in a different learning pattern.

\paragraph{Learning loop lengths.} Predicting $k$ and $k'$ eliminates the arithmetic calculations in the task. The model no longer needs to learn the transformations  $n \to \frac {3n+1}2$ and $n \to \frac n 2$, and the task becomes a pattern recognition problem: identifying the class $(k,k')$ associated with the input $n$ (from a small set of possible classes). We know from Section~\ref{sec:theory} that this can be deduced from the $k+k'+1$ rightmost bits in the binary representation of $n$. 

\begin{wrapfigure}{r}{0.45\textwidth}
\includegraphics[width=0.9\linewidth]{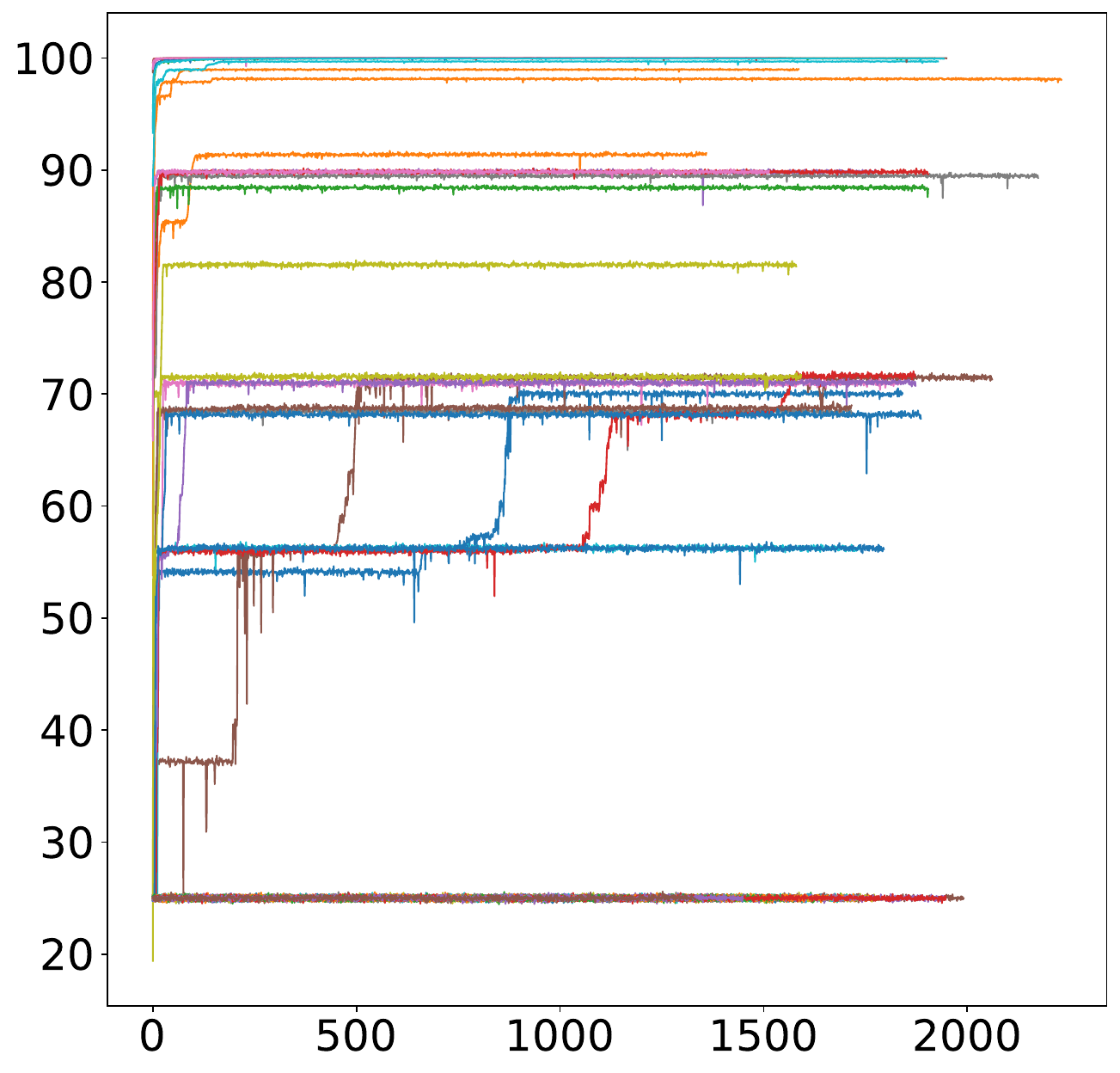}
\caption{\small Learning curves for k and k' prediction. Bases 2 to 57.}
    \label{fig:kkprime}
\end{wrapfigure}

The learning curves are step-shaped and accuracy take quantized levels that roughly correspond to those observed when predicting $\kappa(n)$ (Figure~\ref{fig:kkprime}). Models using base $2, 4, 8, 16$ or $32$ achieve $100\%$ accuracy. In these bases, $k$ and $k'$ can be read from the (binary) input sequence. Base-$48$ and $24$ models achieve almost perfect accuracies ($99.99$ and $99.98$), but all other bases perform worse than when predicting the long Collatz successor. In particular, odd-base models fail to learn the task and achieve $25\%$ accuracy by always predicting the most frequent outcome, $k$=$k'$=$1$. Even-base models not a power of $2$ or divisible by $24$ reach higher accuracies, but always perform worse than when predicting $\kappa(n)$. For instance, base-$30$ and base-$46$ models achieve $56\%$ accuracy, vs $89\%$ for $\kappa(n)$.

The learning pattern applies: $k$ and $k'$ are learned in increasing order of $k+k'$, yielding the familiar quantized accuracy levels at $25$, $37$, $56$, $71$, $81$, $88$, and $91\%$ (Table~\ref{tab:kkprime}). However, a fine structure appears, as accuracy levels split into bands of close, quantized values, where the largest correct values of $k'$ are predicted with less than $99\%$ accuracy.

For instance, models using bases $14$, $50$ and $54$ achieve $72\%$ accuracy by predicting all inputs with $k=1$, $k'\leq 4$, $k=2$, $k'\leq 2$ and $k=3, 4$, $k'=1$, with over $98\%$ accuracy. Base-$34$ and $22$ models predict the same pairs, but $(1,3)$, $(2,2)$ and $(4,1)$ are predicted with $95\%$ accuracy only, for a model performance of $71.5\%$. In the base-$38$ model these three pairs are predicted with $90\%$ accuracy, for a performance of 
$70\%$, and in base $46$ and $30$ with $75\%$, for a performance of $69\%$. These suggest a second-order mechanism for the learning of new $(k,k')$ classes. Here, all three classes are learned simultaneously, but in discrete steps.

\begin{table}[t]
    \scriptsize
    \centering
    \begin{tabular}{lll}
        \toprule
         Bases & Accuracy & Correct predictions (k-k')  \\
        \midrule
        40  & 99.8\% & all  (k-k') for $k+k'\leq 12$, (12,1)\\
        12  & 99.1\% & all  (k-k') for $k+k'\leq 10$, (10,1)\\
        56 & 98.3\% & all  (k-k') for $k+k'\leq 9$, (9,1)\\
        20, 44, 28, 52 & 90.1\% &  (1-1,2,3,4,5), (2-1,2,3,4), (3-1,2,3), (4-1,2), (5,1), (6,1)\\
       36 & 88.8\% & (1-1,2,3,\textcolor{red}{4,5}), (2-1,2,\textcolor{red}{3,4}), (3-1,\textcolor{red}{2,3}), (4-1,\textcolor{red}{2}), \textcolor{red}{ (5-1), (6-1) (92\%)}\\
       10 & 81.5\% & (1,1,2,3,\textcolor{red}{4}), (2,1,2,\textcolor{red}{3}), (3,1,\textcolor{red}{2}), (4,1),  \textcolor{red}{(5,1) (90\%)}\\ 
       14, 50, 54 & 72\% & (1-1,2,3), (2-1,2), (3-1), (4-1) \\
        34, 22 & 71.5\% & (1-1,2,\textcolor{red}{3}), (2-1,\textcolor{red}{2}), (3-1), \textcolor{red}{(4-1) (95\%)}\\  
        38  & 70\% & (1-1,2,\textcolor{red}{3}), (2-1,\textcolor{red}{2}), (3-1), \textcolor{red}{(4-1) (90\%)} \\
       42, 26, 18 & 69\% & (1-1,2,\textcolor{red}{3}), (2-1,\textcolor{red}{2}), (3-1), \textcolor{red}{(4-1) (75\%)} \\
        46, 30 &  56.5\% & (1-1,2), (2-1), (3-1) \\
        
       \bottomrule
    \end{tabular}
    \small
    \caption{\small Predicting k and k'. Model predictions for even bases, after 500 million examples. Accuracies below 98\% are in red.}
    \label{tab:kkprime}
\end{table}

The lower performance of models learning to predict $k$ and $k'$, compared to models predicting $\kappa(n)$, is very counter-intuitive. One would expect that a simpler task  -- predicting loop lengths -- is easier to learn than a complex one -- predicting loop lengths AND calculating $\kappa(n)$. We believe that the behavior of odd-base models, that easily learn the obvious solution $k=k'=1$, for $25\%$ accuracy, but never get out of the local minimum of the loss function, offer insight about what is happening.
In this task, the model is tasked to predict two small integers, when predicting $k=k'=1$, from uniformly distributed inputs, the model is correct on both tokens, i.e. the cross-entropy loss is zero, in $25\%$ of cases, but the model predicts at least one integer right in $75\%$ of training examples. This suggests a very flat loss landscape around the local minimum, which becomes very hard to escape. The same observation can be made for larger accuracy levels: because the distribution of $k$ and $k'$ are independent and exponentially decreasing, the loss function is very flat around local minima. In contrast, when predicting $\kappa(n)$, there is no obvious solution, that would result in flat local minima. This empirical observation deserves further story: it suggests that simple but hard-to-learn arithmetic functions may be easier to learn if they are ``relaxed'' into harder (regression) problems. Here, even if the goal is to learn $k$ and $k'$, on would prefer to train a model to predict $\kappa(n)$, from which $k$ ad $k'$ can be deduced.

\subsection{Can transformers learn base conversion?}\label{sec:base_change}

The long Collatz step is easier to learn when inputs and outputs are encoded in a base divisible by a large power of $2$. In such bases, the rightmost bits in the binary representation of $n$, which encode the loop lengths $k$ and $k'$ (Section~\ref{sec:theory}), can be deduced from the last tokens in the input sequence (specifically, if $B$ is a multiple of $2^d$, the $p$ last bits in $B_n$ are encoded in the $\lceil \frac p d \rceil$ last digits of the sequence representing $n$).  

This suggests a possible explanation for the accuracy gaps between different bases. Models might first learn to convert their input into binary representation, then learn the long Collatz step. The worse performance of odd-models would then be a consequence of the additional work of learning to convert model input into their binary representation. For output bases, we selected $2, 8, 32$ and $24$, the bases that achieved the best performance in our experiments from Section~\ref{sec:base}. For input base, we chose $4, 12, 22, 36, 42, 56, 3, 9, 27, 11, 15, 13, 31$ and $43$. For each of these $56$ input/output pairs, two models were trained over a random training set of $700$ million integers from $1$ to $10^{12}$.

The results were very disappointing: only $8$ models out of $108$ achieved an accuracy of $98\%$ or more: the $6$ models converting base $4$ into base $2, 8$ and $32$, and the $2$ models converting base $12$ into $24$. The two models converting base $4$ into $24$ achieved $47\%$ accuracy, and models converting base $36$ into $24$ achieved $11$ and $1\%$. All other models achieved less than $0.2\%$ accuracy. Models only learn to convert in the trivial case where one base is a multiple of the other.

These results refute our hypothesis, by demonstrating that base conversion is a hard task for transformers. They also confirm the surprising nature of our results on odd-base models. The learning pattern clearly indicates that odd-base models learn to classify inputs according to their binary ending (i.e. their residuals modulo $2^p$), but our experiments demonstrate that learning the binary representation of a number from its digits in an odd-base sequence is a hard task.

\subsection{Training from different distributions}\label{sec:uniform}

Corollary~\ref{corr:prob} from Section~\ref{sec:theory} offer a possible explanation for the learning pattern. Since the probability of a uniformly distributed input $n$ being associated with $k$ and $k'$ is $2^{-(k+k')}$, inputs $n$ associated with $k=k'=1$, the first class in the learning pattern, account for $25\%$ the training set. The second class ($k=2$, $k'=1$) account for $12.5\%$, inputs with $k=k'=3$ for $1.5625\%$, and inputs with $k\geq 5$ for $6.25\%$. Thus, the learning pattern might just be a consequence of the distribution of training data, and large values of $k$ and $k'$ would be hard to learn because they are less common. Such a situation was observed in previous work~\citep{charton2024gcd}, and large increases in performance could be achieved by training the model on a more balanced distribution. 

\begin{table}[t]
    \scriptsize
    \centering
    \begin{tabular}{cccc|cccc}
        \toprule
         Even bases & Uniform  & Log-uniform  & Baseline & Odd bases & Uniform & Log-uniform & Baseline  \\
        \midrule
         32 & 93.9 & 97.6 & 99.8 & 33 & 0 & 0.2 & 89.1 \\
         16  & 98.5  & 99.2 & 99.7 & 9 & 0.1 & 0.3 & 82.8 \\
         12 & 99.8 & 99.6 & 99.7 & 17 & 0 & 0 & 82.3 \\
         36 & 98.9 & 99.5 & 99.7 & 27 & 1.1 & 1.3 & 71.8 \\
         24 & 100& 100 & 99.7 & 45 & 0 & 0 & 71.8 \\
         48 & 100 & 99.9 & 99.6 & 47 & 0 &0  & 71.7 \\
         4 & 96.7 & 97.0 & 99.4 & 21 & 0 & 0 & 71.6 \\
         56 & 98.8 & 99.5 & 99.3 & 51 & 0 &0  & 71.6 \\
         8 & 98.7 & 97.0 & 99.3 & 41 & 0 & 0 & 71.5 \\
         40 & 98.6 & 99.4 & 99.2 & 49 & 0 & 0 & 71.3\\
         20 & 98.6 &97.0 & 98.9 & 15 & 0 & 0 & 71.2 \\
         6 & 98.7 & 99.7 & 98.8 & 25 & 0 & 0 & 70.8\\
         28 & 97.6 & 96.5 & 98.7 & 31 & 0 & 0 & 67.9 \\
         2 & 97.9 & 98.4 & 97.7 & 23& 0 & 0 & 62.0 \\
         18 & 98.2 & 99.5 & 97.3 & 39& 0 & 0 & 56.3 \\
         44 & 94.5  & 98.2 & 96.5 & 43 & 0 & 0 & 56.2 \\
         52 & 94.9 & 94.7 & 96.4 & 13 & 0 & 0 & 56.2 \\
         42 & 90.6 & 95.8 & 94.7 & 29 & 0 & 0 & 56.1 \\
         30 & 90.9 & 93.0 & 93.7 & 57 & 0 & 0 & 56.1 \\
         10 & 87.0 & 90.7 & 93.6 & 53 & 0 & 0 & 56.1 \\
         14 & 86.1 & 95.5 & 93.2 & 19 & 0 & 0 & 56.0 \\
         54 & 89.6 & 97.3 & 90.3 & 35 & 0 & 0 & 55.9 \\
         22 & 77.8 &89.0  & 89.7 & 5 & 0 & 0 & 55.8 \\
         50 & 49.3 & 87.2 & 89.5 & 37 & 0 & 0 & 37.7 \\
         34 & 85.7 & 88.0 & 89.2 & 55 & 0 & 0 & 37.6 \\
         26 & 84.8. & 80.0 & 82.7 & 11 & 0 & 0 & 37.3 \\
         38 & 72.6 & 82.3 & 82.5 & 7 & 0 & 0 & 36.9  \\
         46 & 71.2 & 82.6 & 82.5 & 3 & 0 & 0 & 25.2 \\
       \bottomrule
    \end{tabular}
    \small
    \caption{\small Predicting the long Collatz steps with different training set distributions. }
    \label{tab:uniform}
\end{table}

Here, we experiment with two training distributions for $k$: uniform ($\mathcal{P}(k)=c$), and log-uniform ($\mathcal{P}(k)=\frac c k$). The distributions of $k'$, and the test distributions are not modified. To sample inputs $n$ from such distributions, we first select a value $l$ from $1$ to $16$, according to a uniform or log-uniform distribution, and sample $u$ uniformly between $1$ and $\lfloor \frac{10^{12}}{2^{l+1}} \rfloor$. Setting $n=u*2^{l+1}+2^l-1$, we must have $k=l$, which follows a uniform or log-uniform distribution over $\{1,\dots,16\}$, and $k'$ that follows the ``regular'' distribution $\mathcal P(k'=l')=\frac {1}{2^{l'}}$. As before, we train models with the same architecture and number of parameters as in our previous experiments, using bases from $2$ to $57$, on training sets of $600$ million random examples. Test sets for these experiments are generated as before, and display the natural (exponentially decreasing) distribution of $k$ and $k'$.

On both training distributions, odd-base models achieve performances close to zero. Even-base models learn, but achieve worse accuracies than in the base experiments when trained from  uniform $k$. On the training sample with log-uniform $k$, they achieve the same performance as the base experiments (Table~\ref{tab:uniform}). Overall, a change in training distribution degrades model performance (or, at best, does not improve it). This is a significant departure from our results on GCD. It suggests that the natural, power-law distribution of $k$ and $k'$ is required (at least in odd-base models) for the learning pattern to emerge.

\section{Discussion}

Many previous works reported that large language model and small transformers struggle on simple arithmetic tasks. Our experiments indicate that, on a complex arithmetic function like the long Collatz step, models using the best encoding bases achieve near-perfect accuracy. Our analysis of model errors also show that even less powerful models perform better than their accuracy results suggest: model failures are explainable because mathematical properties of Collatz sequences have been learned. Still, our results are disappointing: they show that all models, no matter their accuracy, only learn to predict some of their inputs, associated with small values of $k$ and $k'$. This puts a limit on what models can learn, but also tell an interesting story about how transformers learn algorithms.

\paragraph{The limit is the loops, not the arithmetic.} Our analysis of the long Collatz step reveals an important fact about arithmetic transformers: the difficulty of learning a complex arithmetic function like $\kappa(n)$ does not lie in learning the arithmetic calculations (the repeated application of $n \to 3n+1$ and $n \to \frac n 2$), but in learning the control structures of the algorithm: the length of the loops in the computation. In fact, once a pair $(k,k')$ is learned, the model has no difficulty figuring the associated mapping $\kappa_{k,k'}(n)$, and achieving almost perfect accuracy on associated inputs.

This highlights the strangeness of the learning pattern. In most deep learning studies, we think of the model as a universal approximator~\citep{hornik1991approximation}, which begins as a random function, and gets closer to the solution as the training loss is minimized. Since each model update involves a small learning-step-sized move in the general direction of the gradient (computed on a mini-batch and amortized over several optimization steps), the solution learned changes slowly, and almost continuously, over all its domain of definition. This usually causes the accuracy curves be smooth and increasing, and the approximations provided by the model to get closer and closer for all input. 

The learning pattern of the long Collatz step is very different: the function learns by sudden bursts, where accuracy jumps by several percentage points, and when the model learns to predict, perfectly, a whole new class of inputs (associated with loop lengths $k$ and $k'$). We can make this description very precise: the long Collatz successor of $n$, $\kappa(n)$, depends on two functions, $k(n)$ and $k'(n)$, as
$$\kappa(n) = \kappa(n, k(n),k'(n))=\frac {(\frac3 2)^k (n+1) -1}{2^{k'}}.$$

The approximations of $\kappa(n)$ learned by the model belong to a very specific class: restrictions of $\kappa(n,k,k')$ to particular values of $k$ and $k'$. They achieve $100\%$ accuracy on all inputs associated with those values of $k$ and $k'$ and are wrong (but in a principled way) everywhere else. The learning pattern is the sequence of pairs $(k,k')$ learned, it is the same for all bases and model initialization. Somehow, the model does not act as a ``universal approximator'', but is limited to a very small class of functions.

Once a new pair $(k,k')$ is learned, the model learns the corresponding function at once, as if the model already knew the general template $\kappa(n,k,k')$ for all $k$ and $k'$, and only needed to learn to discriminate inputs with $k \geq l$ (for $k'=1$), or $k'\geq l'$ (for $l'>1$), to instantiate a new specialization.

In retrospect, these observations seem applicable to prior work. Models trained to predict the greatest common divisor of two integers learn a classification of their input pairs according to their common divisors (the largest of which becomes the model prediction for this class of input). This is a simpler case, because each input class correspond to one output value (the most common output in the class). Yet, the model learns the solution one class of input at a time, instead of learning better and better approximations, over the full domain of definition. We believe a similar patter occurs for length generalization: models learning to predict input of a different length that those seen during training.

\paragraph{Classifying inputs one bit at a time.}

Our observations from Sections~\ref{sec:theory} and~\ref{sec:hierarchy} demonstrate that learning the loops amounts to classifying inputs by their binary suffixes. The learning pattern indicates that this is done one bit at a time.

In odd-base models, all odd inputs (ending in $1_2$) are initially predicted as $\kappa_{1,1}(n)$, for an initial accuracy of $25\%$. After some time, the next accuracy level ($37.5\%$) is learned, and input are split into two classes:
\begin{itemize}[nosep] 
    \item inputs ending in $11_2$ ($k \geq 2$) predicted as $\kappa_{2,1}(n)$,
    \item inputs ending in $01_2$ ($k=k'=1$) predicted (as before) as $\kappa_{1,1}(n)$.
\end{itemize}
The next step brings accuracy to $55\%$, with the model predicting:
\begin{itemize}[nosep] 
    \item inputs ending in $111_2$ ($k \geq 3$) as $\kappa_{3,1}(n)$,
    \item inputs ending in $011_2$ ($k \geq 3$) as $\kappa_{2,1}(n)$,
    \item inputs ending in $101_2$ ($k \geq 3$) as $\kappa_{1,2}(n)$,
    \item inputs ending in $001_2$ ($k=k'=1$) as $\kappa_{1,1}(n)$.
\end{itemize}
Each input class splits in two, as the model learns the third bit of the binary representation.

The next step (to $71\%$ accuracy), involves three new splits (that can be observed as intermediary steps in some learning curves from Figure~\ref{fig:stepcurves}): 
\begin{itemize}[nosep] 
    \item $111_2$ into $1111_2$, predicted as $\kappa_{4,1}(n)$ and $0111_2$, predicted as $\kappa_{3,1}(n)$,
    \item $011_2$ into $0011_2$, predicted as $\kappa_{2,2}(n)$ and $1011_2$, predicted as $\kappa_{2,1}(n)$,
    \item $101_2$ into $0101_2$, predicted as $\kappa_{1,3}(n)$ and $1101_2$, predicted as $\kappa_{1,2}(n)$.
\end{itemize}

Each step in the learning pattern corresponds to the model learning one more bit in the representation of its inputs. This is easier when the input is encoder in an even base. 

How odd-base models learn the binary representation of their input remains a mystery. The classical algorithm for computing the residual modulo $2^p$ of an  integer represented in base $B$ involves computing a weighted sum of the digits (by the powers of $B$) modulo $2^p$, but prior work has shown that such tasks are hard for transformers~\citep{saxena2025}.

\paragraph{On the absence of hallucinations.} On average, $97\%$ of model errors are accounted for by our analysis of model failures (Section~\ref{sec:errors}). Up to some small rounding error, model predictions are close to $\kappa_{l,l'}(n)$, for particular (and predictable) values of $l$ and $l'$. Our models never hallucinate random solutions. This observation was made in previous works~\citep{charton2024gcd, charton2022,butter2024extrapolatingjetradiationautoregressive}, but it contradicts observations about large language models (LLM), which frequently hallucinate. Hallucinations happen when plausibility is not well estimated (because of lack of data), or when it is not a good proxy for truth. In mathematics, truth is decided by specific rules. Plausibility may work for oft repeated statements, like $2+2=4$, but it is not a good proxy for truth. Supervised models, on the other hand, are trained on true statements. Models still make errors, but they do not predict at random.

\paragraph{Can transformers learn the long Collatz step?} While some of our models achieve surprisingly high accuracies, our observations about the learning pattern bring a disappointing conclusion. The models do not learn a general algorithm, valid for all inputs, but a near-perfect approximation of $\kappa(n)$ for a large class of inputs. As long as training data is finite, some inputs will not be covered. We believe this is a general fact about deep learning models. 
Note, however, that this is not specific to deep learning. Most algorithms implemented on a computer present such limitations, e.g. they are restricted to $32$, or $64$-bit integers, or limited to floating point arithmetic.

\paragraph{A new path to explainability.}
Our approach to model interpretability differs from most previous works. Instead of investigating model parameters and learned representations, we treat models as deterministic black boxes, and experiment on them by comparing their predictions for specific inputs. This approach is well-suited to problems of mathematics, where known theoretical properties of the problem can be leveraged to help understand model predictions. For instance, the analysis from Section~\ref{sec:pattern} point to a possible link between model prediction and the binary representation of inputs, that we can associate to loop length by the theoretical arguments of Section~\ref{sec:theory}. In Section~\ref{sec:errors}, the empirical distribution of the ratios $p/t$ and the values of their modes ($2$ and $\frac 2 3$) suggest a hypothesis about model predictions that could be verified on test examples, and generalized into a full hierarchy of errors. 
We believe this approach is worth developing, both as a tool for understanding deep learning and transformers and for mathematical discovery.

\paragraph{Mathematics as a tool for understanding transformers.} Modern deep learning architectures, like transformers, are difficult to understand, for two reasons. 
First, they are often evaluated on annotated datasets, of variable quality, and with binary metrics (model answers are either right or wrong), that make failure analysis almost impossible. Second, the distribution of training and test data are difficult to control. This raises concerns about contamination and the values of claims on model generalization.

Problems of mathematics offer a solid alternative for benchmarking and interpreting models. They come with an objective and verifiable criterion for success, and failures can usually be analyzed (as we do here) by leveraging theoretical knowledge about the problem. The underlying theory can also help explain model predictions, and design experiments, as we did in Section~\ref{sec:ablations} by replacing the task with an easier one. 
Math datasets are usually generated. This provides control over the training and test distributions, and helps prevent data contamination.

\paragraph{Learning about Collatz, transformers for mathematical discovery.} Experimenting with transformers on problems of mathematics can also be a tool for mathematical discovery. Our initial experiments and the analysis of the learning pattern demonstrate the importance of the  binary representation of $n$, and suggest its relation with $k$ and $k'$, a non-obvious fact in the case of $k'$ that we could prove. More generally, experimenting with different representations of a problem (e.g. different bases for tokenization), or the presence of additional features in the training set (e.g. learning $k$ and $k'$ instead of $\kappa(n)$), and observing how this helps or hinders, learning, can provide insight into the problem, and suggest new directions for research. We believe this constitutes a potential application of artificial intelligence to mathematics and science.

The availability of source code and compute makes this approach quite affordable. All experiments in this paper run in a few hours on one machine with an obsolete GPU (V100), and relies on open-source codebase (Int2Int\footnote{https:/www.github.com/f-charton/Int2Int}).

\paragraph{Acknowledgements}
The initial ideas for this research came up during the AI and Maths program organized by the Center for Mathematical Sciences and Applications in Harvard, from September to November 2024. We thank Dan Freed and Mike Douglas for hosting us. This research was initiated while Fran\c{c}ois worked at Meta. Ashvni thanks the Hooper-Shaw foundation and the Sydney Mathematical Research Institute for supporting her.

\bibliography{padic}

\end{document}